\numberwithin{equation}{section}
\definecolor{forest}{rgb}{0,0.5,0.0}
\theoremstyle{plain}
\newtheorem{theorem}{Theorem}[section]
\newtheorem{lemma}[theorem]{Lemma}
\newtheorem{remark}[theorem]{Remark}
\crefname{figure}{Fig.}{Fig.}
\crefname{table}{Table}{Table}
\crefname{algorithm}{Alg.}{Algs.}
\crefname{hypothesis}{Hypothesis}{Hypotheses}
\newcommand*{\addFileDependency}[1]{
  \typeout{(#1)}
  \@addtofilelist{#1}
  \IfFileExists{#1}{}{\typeout{No file #1.}}
}
\newcommand{\email}[1]{\protect\href{mailto:#1}{#1}}
\renewcommand{\phi}{\varphi}
\renewcommand{\triangleq}{:=}
\providecommand{\mathbbm}{\mathbb} 
\newcommand{\R}{\mathbbm{R}}
\renewcommand{\L}{\mathcal{L}}
\definecolor{mygreen}{rgb}{0.1,0.75,0.2}
\newcommand{\Expect}{\operatorname{\mathbb{E}}}
\newcommand{\Nc}{\mathcal{N}}
\newcommand{\dd}{\text{d}}
\newcommand{\oneton}[1]{\{#1\}_{n=1}^N}
\newcommand*{\B}[1]{\ifmmode\bm{#1}\else\textbf{#1}\fi}
\newcommand{\const}{{\text{const}}}
\newcommand{\EnKF}{{\footnotesize \text{EnKF}}}
\newcommand{\PF}{{\footnotesize \text{PF}}}
\newcommand{\NN}{{\footnotesize \text{NN}}}
\newcommand{\train}{{\footnotesize \text{train}}}
\newcommand{\test}{{\footnotesize \text{test}}}
\newcommand{\EMEnKF}{{\footnotesize \text{EM-EnKF}}}
\newcommand{\LhEnKF}{\L_{\EnKF}}
\newcommand{\LEnKF}{\L_{\EnKF}}
\newcommand{\LPF}{\L_{\PF}}
\newcommand{\LEMEnKF}{\L_{\EMEnKF}}
\newcommand{\LhPF}{\L_{\PF}}
\DeclareMathOperator*{\argmax}{arg\,max}
\newcommand{\E}{\mathbb{E}}
\newcommand{\ctable}[2]{\begin{tabular}[c]{@{}c@{}} #1\\ #2\end{tabular}}
\newcommand{\rmsef}{\text{RMSE-f}}
\newcommand{\rmsea}{\text{RMSE-a}}
\newcommand{\bftab}{\fontseries{b}\selectfont}
\newcommand{\tn}[1]{{#1}_t^n}
\newcommand{\tnf}[1]{ \widehat{#1}_t^{\, n}}
\newcommand{\onf}[1]{\widehat{#1}_1^{\,n}}
\newcommand{\tmn}[1]{{#1}_{t-1}^n}
\newcommand{\tpnf}[1]{\widehat{#1}_{t+1}^{\,n}}
\newcommand{\ton}[1]{{#1}_t^{1:N}}
\newcommand{\tonf}[1]{\widehat{#1}_t^{\,1:N}}
\newcommand{\tf}[1]{\widehat{#1}_t}
\newcommand{\incircle}[1]{\text{\textcircled{\scriptsize #1}}}%
\newcommand{\convincircle}[1]{\xrightarrow[\incircle{#1}]{1/2}}
\newcommand{\iidsim}{\overset{\text{i.i.d.}}{\sim}}
\newcommand{\nt}{\nabla_\theta}
\newcommand{\pt}{\partial_\theta}
\newcommand{\na}{\nabla_\alpha}
\newcommand{\nb}{\nabla_\beta}
\newcommand{\mf}[1]{\mathsf{\bf{#1}}}
\newcommand{\Ss}{\mathbb{S}}
\newcommand{\conv}{\xrightarrow{1/2}}
\newcommand{\Hquad}{\hspace{0.5em}}
\protected\def\tikz@nonactivecolon{\ifmmode\mathrel{\mathop\ordinarycolon}\else:\fi} 
\title{Auto-differentiable Ensemble Kalman Filters}
\author{Yuming Chen\thanks{University of Chicago, Chicago, IL (\email{ymchen@uchicago.edu, sanzalonso@uchicago.edu, willett@uchicago.edu)}}
\and Daniel Sanz-Alonso\footnotemark[1]
\and Rebecca Willett\footnotemark[1]}
\date{}
\begin{document}
\maketitle

\begin{abstract}
Data assimilation is concerned with sequentially estimating a temporally-evolving state. This task, which arises in a wide range of scientific and engineering applications, is particularly challenging when the state is high-dimensional and the state-space dynamics are unknown. This paper introduces a machine learning framework for learning  dynamical systems in data assimilation. Our auto-differentiable ensemble Kalman filters (AD-EnKFs) blend ensemble Kalman filters for  state recovery with machine learning tools for learning the dynamics. In doing so, AD-EnKFs leverage the ability of ensemble Kalman filters to scale to high-dimensional states and the power of automatic differentiation to train high-dimensional surrogate models for the dynamics. Numerical results using the Lorenz-96 model show that AD-EnKFs outperform existing methods that use expectation-maximization or particle filters to merge data assimilation and machine learning. In addition, AD-EnKFs are easy to implement and require minimal tuning.
\end{abstract}

\section{Introduction}
Time series of data arising across geophysical sciences, remote sensing, automatic control, and a variety of other scientific and engineering applications often reflect observations of an underlying dynamical system operating in a latent state-space. Estimating the evolution of this latent  state from data is the central challenge of data assimilation (DA) \cite{jazwinski2007stochastic,evensen2009data,sanzstuarttaeb,law2015data,reich2015probabilistic}. However, in these and other applications, we often lack an accurate model of the underlying dynamics, and the dynamical model needs to be learned from the observations to perform DA. This paper introduces auto-differentiable ensemble Kalman filters (AD-EnKFs), a machine learning (ML) framework for the principled co-learning of states and dynamics. This framework enables learning in three core categories of unknown dynamics: (a) parametric dynamical models with unknown parameter values; (b) fully-unknown dynamics captured using neural network (NN) surrogate models; and (c) inaccurate or partially-known dynamical models that can be improved using NN corrections. AD-EnKFs are designed to scale to high-dimensional states, observations, and NN surrogate models.

In order to describe the main idea behind the AD-EnKF framework, let us introduce briefly the problem of interest. Our setting will be formalized in \cref{sec:formulation} below. Let  $x_{0:T} \triangleq \{x_t\}_{t=0}^T$ be a time-homogeneous \emph{state process} with transition kernel $p_\theta(x_t|x_{t-1})$ parameterized by  a vector $\theta.$ For instance, $\theta$ may contain unknown parameters of a parametric dynamical model or the parameters of a NN surrogate model for the dynamics. Our aim is to learn $\theta$ from observations $y_{1:T}  \triangleq \{y_t\}_{t=1}^T $ of the state, and thereby learn the unknown dynamics and estimate the state process.  
The AD-EnKF framework learns $\theta$ iteratively. Each iteration consists of three steps: (i) use EnKF to compute an estimate $\LhEnKF(\theta)$ of the data log-likelihood $\L(\theta) \triangleq \log p_\theta(y_{1:T});$ (ii) use auto-differentiation (``autodiff'') to compute the gradient $\nt \LhEnKF(\theta);$ and (iii) take a gradient ascent step. Filtered estimates of the state are obtained using the learned dynamics.

The EnKF, reviewed in \cref{sec:logandgrad}, estimates the data log-likelihood using an ensemble of particles. Precisely, given a transition kernel $p_\theta(x_t|x_{t-1})$, the EnKF generates particles $x_{0:T}^{1:N};$ that is, each particle $x_t^n \in \R^{d_x}$ and 
$x_{0:T}^{1:N} \triangleq \{x_t^n\}_{{t = 0,\ldots T \atop n = 1,\ldots,N}}.$
The log-likelihood estimate $\LhEnKF(\theta)$ depends on $\theta$ through these particles and also through the given transition kernel.
Differentiating the map $\theta \mapsto \LhEnKF(\theta)$  in step (ii) of AD-EnKF involves differentiating \emph{both} the map $\theta \mapsto x_{0:T}^{1:N}$ from parameter to EnKF particles and the map  $(\theta,x_{0:T}^{1:N}) \mapsto \LhEnKF(\theta)$ from parameters and EnKF particles to EnKF log-likelihood estimate. {\em A key feature of our approach is that $\theta \mapsto \LhEnKF(\theta)$ can be auto-differentiated using the reparameterization trick (\cite{kingma2013auto} and \cref{sec:grad}) and autodiff capabilities of NN software libraries} such as PyTorch \cite{paszke2019pytorch}, JAX \cite{jax2018github}, and Tensorflow \cite{abadi2016tensorflow}. Automatic differentiation is different from numerical differentiation in that derivatives are computed exactly through compositions of elementary functions whose derivatives are known, as opposed to finite difference approximations that cause discretization errors.

The AD-EnKF framework represents a significant conceptual and methodological departure from existing approaches to blend DA and ML based on the expectation-maximization (EM) framework, see \cref{fig:compGraph} below.  
Specifically, at each iteration, EM methods that build on the EnKF \cite{pulido2018stochastic,brajard2020combining,bocquet2020bayesian} employ a surrogate likelihood $\LEMEnKF \bigl(\theta\, ; x_{0:T}^{1:N} \bigr)$ where the particles $x_{0:T}^{1:N}$ are generated by EnKF and \emph{fixed}.
Importantly, EM methods compute gradients used to learn dynamics by differentiating only through the $\theta$-dependence in $\LEMEnKF$ that does not involve the particles. In particular, in contrast to AD-EnKF, the map $\theta \mapsto x_{0:T}^{1:N}$ from parameter to EnKF particles is \emph{not} differentiated.
Moreover, the performance of EM methods is sensitive to the specific choice of EnKF algorithm in use, and the tuning of additional algorithmic parameters of EM can be challenging \cite{brajard2020combining,bocquet2020bayesian}. Our numerical experiments suggest that, even when optimally tuned, EM methods underperform AD-EnKF in high-dimensional regimes. The outperfomance of AD-EnKF may be explained by the additional gradient information obtained by differentiating the map $\theta \mapsto x_{0:T}^{1:N}$.

\begin{figure}[!htb]
\hspace{-1.15cm}
\begin{subfigure}[t]{.3\linewidth}
\centering
\begin{tikzpicture}[scale = .75, every node/.style={scale=.75}]
\tikzstyle{main}=[circle, minimum size = 5mm, thick, draw =black!80, node distance = 10mm]
\tikzstyle{connect}=[-latex, thick]
\tikzstyle{rconnect}=[-latex, thick, draw=blue]
\tikzstyle{box}=[rectangle, draw=black!100]
  \node[main] (theta) {$\theta$};
  \node[main,fill=black!10] (y) [right=of theta]{$y_{1:T}$};
  \node[main] (u) [below =of theta] {$x_{0:T}^{1:N}$};
  \node[main, minimum size=20mm] (ll) [below =of u] {$\LhEnKF$};
  \path (theta) edge [connect] (u);
  \path (y) edge [connect] (u);
  \path (theta) edge [connect,bend right=60] (ll);
  \path (y) edge [connect, bend left=20] (ll);
  \path (u) edge [connect] (ll);
  \draw[black,thick,dotted] ($(u.south west)+(-0.3,-0.3)$)  rectangle ($(y.north east)+(0.3,0.5)$);
\end{tikzpicture}
\subcaption{AD-EnKF}
\end{subfigure}
\hspace{-0.45cm}
\begin{subfigure}[t]{.3\linewidth}
\centering
\begin{tikzpicture}[scale = 0.75, every node/.style={scale=0.75}]
\tikzstyle{main}=[circle, minimum size = 5mm, thick, draw =black!80, node distance = 10mm]
\tikzstyle{connect}=[-latex, thick]
\tikzstyle{box}=[rectangle, draw=black!100]
  \node[main] (theta) {$\theta$};
  \node[main,fill=black!10] (y) [right=of theta]{$y_{1:T}$};
  \node[main] (u) [below =of theta] {$x_{0:T}^{1:N}$};
  \node[main, minimum size=20mm] (ll) [below =of u] {$\L_{\text{EM-EnKF}}$};
  \path (theta) edge [connect,bend right=60] (ll);
  \path (u) edge [connect] (ll);
  \draw[black,thick,dotted] ($(u.south west)+(-0.3,-0.3)$)  rectangle ($(y.north east)+(0.3,0.5)$);  
  \begin{scope}[transparency group, opacity=0.25]
      \path (theta) edge [connect] (u);
      \path (y) edge [connect] (u);
  \end{scope}
\end{tikzpicture}
\subcaption{EM-EnKF}
\end{subfigure}
\hfill
\begin{minipage}[T]{.42\linewidth}
\vspace{-1.8in}
With AD-EnKF, parameters $\theta$ and observations $y_{1:T}$ are used to generate EnKF particles $x_{0:T}^{1:N}$; the particles together with $\theta$ and $y_{1:T}$ are used to compute the likelihood $\LhEnKF$, and the gradient $\nt \LhEnKF$ explicitly accounts for the map from $\theta$ to the particles $x_{0:T}^{1:N}$. In contrast, with EM-EnKF, the likelihood $\LEMEnKF$ is a function of $\theta$ and {\em fixed} particles $x_{0:T}^{1:N}$ generated by EnKF, so that computing the gradient $\nt \L_{\text{EM-EnKF}}$ does {\em not} account for the map from $\theta$ to the particles $x_{0:T}^{1:N}$. 
\end{minipage}
\caption{
Computational graph of AD-EnKF and EM-EnKF. Dashed squares represent computations performed by the EnKF. Gray arrows in (b) indicate that the construction of $\L_{\text{EM-EnKF}}$ is performed in two steps: (1) obtain $x_{0:T}^{1:N}$ from $\theta$ and $y_{1:T}$ (gray arrows); and (2) use $\theta$ and $x_{0:T}^{1:N}$ (no longer seen as a function of $\theta$) to define $\L_{\text{EM-EnKF}}.$ In contrast, those lines are black in (a), indicating that in AD-EnKF the particles $x_{0:T}^{1:N}$ in $\LhEnKF$ are seen as varying with $\theta$.
} 
\label{fig:compGraph}
\end{figure}

The AD-EnKF framework also represents a methodological shift from existing differentiable particle filters \cite{naesseth2018variational,maddison2017filtering,le2017auto}. Similar to AD-EnKF, these methods rely on autodiff of a map $\theta \mapsto \LhPF(\theta),$ where the log-likelihood estimate $\LhPF(\theta)$ depends on $\theta$ through weighted particles  $(w_{0:T}^{1:N}, x_{0:T}^{1:N})$ obtained by running a particle filter (PF) with transition kernel $p_\theta(x_t|x_{t-1})$. However, the use of PF suffers from two caveats. First, it is not possible to auto-differentiate directly through the PF resampling steps \cite{naesseth2018variational,maddison2017filtering,le2017auto}. Second, while the PF log-likelihood estimates are consistent, their variance can be large, especially in high-dimensional systems. Moreover, their \emph{gradient}, which is the quantity used to perform gradient ascent to learn $\theta$, is not consistent \cite{corenflos2021differentiable}.  

\subsection{Contributions}
This paper seeks to set the foundations and illustrate the capabilities of the AD-EnKF framework through rigorous theory and systematic numerical experiments. Our main contributions are:
\begin{itemize}
\item We develop new theoretical convergence guarantees for the large sample EnKF estimation of log-likelihood gradients in linear-Gaussian settings (\cref{prop:gradient}).
\item We combine  ideas from online training of recurrent networks (specifically, Truncated Backpropagation Through Time -- TBPTT) with the learning of AD-EnKF when the data sequence is long, i.e. $T$ is large.
\item We provide numerical evidence of the superior estimation accuracies of log-likelihoods and gradients afforded by EnKF relative to PF methods in high-dimensional settings. In particular, we illustrate the importance of using localization techniques, developed in the DA literature, for EnKF log-likelihood and gradient estimation, and the corresponding performance boost within AD-EnKF.
\item We conduct a numerical case study of AD-EnKF on the Lorenz-96 model \cite{lorenz1996predictability}, considering parameterized dynamics, fully-unknown dynamics, and correction of an inaccurate model. The importance of the Lorenz-96 model in geophysical applications and for testing the efficacy of filtering algorithms is highlighted, for instance, in \cite{majda2012filtering,law2012evaluating,law2016filter,brajard2020combining}. Our results show that AD-EnKF outperforms existing methods based on EM or differentiable PFs. The improvements are most significant in challenging high-dimensional and partially-observed settings. 
\end{itemize}

\subsection{Related Work}
The EnKF algorithm was developed as a state estimation tool for DA \cite{evensen1994sequential} and is now widely used in  numerical weather prediction and geophysical applications \cite{szunyogh2008local,whitaker2008ensemble}. Recent reviews include \cite{houtekamer2016review, katzfuss2016understanding,roth2017ensemble}. The idea behind the EnKF is to propagate $N$ equally-weighted particles through the dynamics and assimilate new observations using Kalman-type updates computed with empirical moments. When the state dimension $d_x$ is high and the ensemble size $N$ is moderate, traditional Kalman-type methods require $O(d_x^2)$ memory to store full covariance matrices, while storing empirical covariances in EnKFs only requires $O(N d_x)$ memory. 
The use of EnKF for joint learning of state and model parameters by \emph{state augmentation} was introduced in \cite{anderson2001ensemble}, where EnKF is run on an augmented state-space that includes the state and parameters. However, this approach requires one to design a pseudo-dynamic for the parameters which needs careful tuning and can be problematic when certain types of parameters (e.g., error covariance matrices) are involved \cite{stroud2007sequential,delsole2010state} or if the dimension of the parameters is high.
In this paper, we employ EnKFs to approximate the data log-likelihood. The use of EnKF to perform derivative-free maximum likelihood estimation (MLE) is studied in \cite{stroud2010ensemble,pulido2018stochastic}. An empirical comparison of the likelihood computed using the  EnKF and other filtering algorithms is made in \cite{carrassi2017estimating}; see also \cite{hannart2016dada,metref2019estimating}. The paper \cite{drovandi2021ensemble} uses  EnKF likelihood estimates to design a pseudo-marginal Markov chain Monte Carlo (MCMC) method  for Bayesian inference of model parameters. The works \cite{stroud2007sequential,stroud2018bayesian} propose online Bayesian parameter estimation using the likelihood computed from the  EnKF under a certain family of conjugate distributions. However, to the best of our knowledge, there is no prior work on state and parameter estimation that utilizes gradient information of the EnKF likelihood.

The embedding of EnKF and ensemble Kalman smoothers (EnKS) into the EM algorithm for MLE \cite{dempster1977maximum, bishop2006pattern} has been studied in \cite{tandeo2015offline,ueno2014iterative, dreano2017estimating, pulido2018stochastic}, with a special focus on estimation of error covariance matrices.  The expectation step (E-step) is approximated with EnKS under the Monte Carlo EM framework \cite{wei1990monte}. In addition, \cite{brajard2020combining, nguyen2019like} incorporate deep learning techniques in the maximization step (M-step) to train NN surrogate models. The paper \cite{bocquet2020bayesian} proposes Bayesian estimation of model error statistics, in addition to an NN emulator for the dynamics. On the other hand, \cite{ueno2016bayesian,cocucci2021model} consider online EM methods for error covariance estimation with EnKF. Although gradient information is used during the M-step to train the surrogate model \cite{brajard2020combining, nguyen2019like,bocquet2020bayesian},  these methods do not auto-differentiate through the EnKF (see \cref{fig:compGraph}), and accurate approximation of the E-step is hard to achieve with EnKF or EnKS. 

Another popular approach for state and parameter estimation are particle filters (PFs) \cite{gordon1993novel,doucet2009tutorial} that approximate the filtering step by propagating samples with a kernel, reweighing them with importance sampling, and resampling to avoid weight degeneracy. PFs give an unbiased estimate of the data likelihood \cite{del2004feynman,andrieu2010particle}. Based upon this likelihood estimate, a particle MCMC Bayesian parameter estimation method is designed in \cite{andrieu2010particle}. Although PF likelihood estimates are unbiased, they suffer from two important caveats. First, their variance can be large, as they inherit the weight degeneracy of importance sampling in high dimensions \cite{snyder2008obstacles,bocquet2010beyond,agapiou2017importance,sanz2018importance,sanz2021bayesian}. 
Second, while the propagation and reweighing steps of PFs can be  auto-differentiated, the resampling steps involve discrete distributions 
that cannot be 
handled by
the reparameterization trick. 
For this reason, previous differentiable PFs omit autodiff of the resampling step \cite{naesseth2018variational,maddison2017filtering,le2017auto}, introducing a bias.


An alternative to MLE methods is to optimize a lower bound of the data log-likelihood with variational inference (VI) \cite{bishop2006pattern,kingma2013auto,ranganath2014black}. The posterior distribution over the latent states is approximated with a parametric distribution and is jointly optimized with model parameters defining the underlying state-space model. In this direction, variational sequential Monte Carlo (VSMC) methods  \cite{naesseth2018variational,maddison2017filtering,le2017auto} construct the lower bound using a PF algorithm. Moreover, the proposal distribution of the PF is parameterized and jointly optimized with model parameters defining the state-space model. 
Although VSMC methods provide consistent data log-likelihood estimates, they suffer from the same two caveats as likelihood-based PF methods. Other works that build on the VI framework include \cite{krishnan2017structured,rangapuram2018deep,fraccaro2017disentangled}. An important challenge is to obtain suitable parameterizations of the posterior, especially when the state dimension is high. For this reason, a restrictive Gaussian parameterization with a diagonal covariance matrix is often used in practice \cite{krishnan2017structured,fraccaro2017disentangled}.

\subsection*{Outline}
This paper is organized as follows. \Cref{sec:formulation} formalizes our framework and reviews a characterization of the likelihood in terms of normalizing constants arising in sequential filtering. \Cref{sec:logandgrad} overviews EnKF algorithms for filtering and log-likelihood estimation. \Cref{sec:mainalgorithm} contains our main methodological contributions. Numerical experiments on linear-Gaussian and Lorenz-96 models are described in
\cref{sec:experiments}. We close in \cref{sec:conclusions}.

\subsection*{Notation}
We denote by $t\in\{0,1,\dots,T\}$ a discrete time index and by $n \in \{1,\dots,N\}$ a particle index. Time indices will be denoted with subscripts and particles with superscripts, so that $x_t^n$ represents a generic particle at time $t.$ We denote $x_{t_0:t_1}  \triangleq\{x_t\}_{t=t_0}^{t_1}$ and $ x^{n_1:n_2} \triangleq \{x^n\}_{n=n_0}^{n_1}$. The collection $x_{t_0:t_1}^{n_0:n_1}$ is defined in a similar way. The Gaussian density with mean $m$ and covariance $C$ evaluated at $x$ is denoted by $\Nc(x;m,C)$. The corresponding Gaussian distribution is denoted by $\Nc(m, C)$. For square matrices $A$ and $B$, we write $A \succ B$ if $A-B$ is positive definite, and $A \succeq B$ if $A-B$ is positive semi-definite. For $A \succeq 0$, we denote by $A^{1/2}$ the unique matrix $B \succeq 0$ such that $B^2=A$. We denote by $|v|$  the 2-norm of a vector $v$, and by $|A|$ the Frobenius norm of a matrix $A$. 

\section{Problem Formulation}\label{sec:formulation}
Let  $x_{0:T}$ be a time-homogeneous Markov chain of hidden \emph{states} $x_t \in \R^{d_x}$ with transition kernel $p_\theta(x_t|x_{t-1})$ parameterized by  $\theta \in \R^{d_\theta}.$ Let $y_{1:T}$ be observations of the state. We seek to learn the parameter $\theta$ and recover the state process $x_{0:T}$ from the observations $y_{1:T}$. In \cref{ssec:setting}, we formalize our problem setting, emphasizing our main goal of learning unknown dynamical systems for improved DA. \Cref{ssec:sequentialfiltering} describes how the log-likelihood $\L(\theta) = \log p_\theta(y_{1:T})$ can be written in terms of normalizing constants arising from sequential filtering. This idea will be used in  \cref{sec:logandgrad} to obtain EnKF estimates for $\L(\theta)$ and $\nt \L(\theta)$, which are then employed in \cref{sec:mainalgorithm} to learn $\theta$ by gradient ascent. 

\subsection{Setting and Motivation}\label{ssec:setting}
We consider the following state-space model (SSM)
\begin{alignat}{5}
    &\text{(transition)}&& \quad\quad x_t &&= F_\alpha (x_{t-1}) + \xi_t, \quad\quad && \xi_t \sim \Nc(0, Q_\beta), \quad\quad && 1 \le t \le T, \label{eq:main_state}\\
    &\text{(observation)}&& \quad\quad y_t &&= H x_t + \eta_t, \quad\quad && \eta_t \sim \Nc(0, R), \quad\quad && 1\le t \le T,\label{eq:main_obs}\\
    &\text{(initialization)}&& \quad\quad x_0 &&\sim p_0(x_0).\label{eq:main_init}
\end{alignat}
The initial distribution $p_0$ and the matrices $H \in R^{d_y \times d_x}$ and $R\succ 0$ are assumed to be known. Nonlinear observations can be dealt with by augmenting the state. We further assume independence of all random variables $x_0,$ $\xi_{1:T}$ and $\eta_{1:T}.$ Finally, the transition kernel $p_\theta(x_t|x_{t-1}) = \Nc(x_t;F_\alpha(x_{t-1}),Q_\beta),$  parameterized by $\theta \triangleq\{\alpha,\beta\},$ is defined in terms of a deterministic map $F_\alpha$ and Gaussian additive noise. 
This kernel approximates an unknown state transition of the form
\begin{equation}\label{eq:idealizedmodel}  
   x_t = F^* (x_{t-1}) + \xi_t, \quad\quad  \xi_t \sim \Nc(0, Q^*), \quad\quad  1 \le t \le T,
\end{equation}
where $Q^* = 0$ if the true evolution of the state is deterministic. The parameter $\beta$ allows us to estimate the possibly unknown $Q^*.$ We consider three categories of unknown state transition $F^*$, leading to three types of learning problems:
\begin{enumerate}[label=(\alph*)]
    \item  \emph{Parameterized dynamics}: $F^* = F_{\alpha^*}$ is parameterized, but the true parameter $\alpha^*$ is unknown and needs to be estimated. 
    \item \emph{Fully-unknown dynamics}: $F^*$ is fully unknown and $\alpha$ represents the parameters of a NN surrogate model $F_\alpha^\NN$ for $F^*.$ The goal is to find an accurate surrogate model $F_\alpha^\NN$.  
    \item \emph{Model correction}: $F^*$ is unknown, but an inaccurate model $F_{\text{approx}} \approx F^*$ is available. Here $\alpha$ represents the parameters of a NN $G_\alpha^\NN$ used to correct the inaccurate model. The goal is to learn $\alpha$ so that $F_\alpha \triangleq F_{\text{approx}} + G_\alpha^\NN$ approximates $F^*$ accurately.
\end{enumerate}
In some applications, the map $F^*$ may represent the flow between observations of an autonomous differential equation driving the state, i.e.
\begin{equation}\label{eq:main_ODE_flow}
    \frac{\dd x}{\dd s} = f^*(x), \quad \quad F^* : x(s) \mapsto x(s+\Delta_s),
\end{equation}
where $f^*$ is an unknown vector field and $\Delta_s$ is the time between observations. Then, the map $F_\alpha$ in \eqref{eq:main_state} (resp. $F_\alpha^\NN$, $F_\text{approx}$, $G_\alpha^\NN$) will be similarly defined as the $\Delta_s$-flow of a differential equation with vector field $f_\alpha$ (resp. $f_\alpha^\NN$, $f_\text{approx}$, $g_\alpha^\NN$).  
Once $\theta = \{\alpha,\beta\}$ is learned, the state $x_{0:T}$ can be recovered with a filtering algorithm using the transition kernel $p_\theta(x_t|x_{t-1})$. We will illustrate the implementation and performance of AD-EnKF in these three categories of unknown dynamics in \cref{sec:experiments} using the Lorenz-96 model to define the vector field $f^*$. We remark that learning NN surrogate models for the dynamics may be useful even when the true state transition $F^*$ is known, since $F_\alpha^\NN$ may be cheaper to evaluate than $F^*$. 

\subsection{Sequential Filtering and Data Log-likelihood}\label{ssec:sequentialfiltering}
Suppose that $\theta = \{\alpha,\beta\}$ is known. We recall that, for $1 \le t \le T,$ the \emph{filtering distributions} $p_\theta(x_t|y_{1:t})$ of the SSM \eqref{eq:main_state}-\eqref{eq:main_obs}-\eqref{eq:main_init} can be obtained sequentially, alternating between \emph{forecast} and \emph{analysis} steps:
\begin{align}
\text{(forecast)}& \quad\quad p_\theta(x_t|y_{1:t-1}) = \int \Nc(x_t; F_\alpha(x_{t-1}), Q_\beta) p_\theta(x_{t-1}|y_{1:t-1}) \dd x_{t-1}, \label{eq:forecast}\\
\text{(analysis)}& \quad\quad p_\theta(x_t|y_{1:t}) = \frac{1}{Z_t(\theta)} \Nc(y_t; H x_t, R)  p_\theta(x_t|y_{1:t-1}), \label{eq:filtering}
\end{align}
with the convention $p_\theta(\cdot|y_{1:0}) \triangleq p_\theta(\cdot)$.
Here $Z_t(\theta)$ is a normalizing constant which does not depend on $x_t$. It can be easily shown that
\begin{equation}\label{eq:normalizing}
    Z_t(\theta) = p_\theta(y_t|y_{1:t-1})=\int \Nc(y_t; H x_t, R)  p_\theta(x_t|y_{1:t-1}) \dd x_t,
\end{equation}
and therefore the data log-likelihood admits the characterization
\begin{equation}\label{eq:likecharacterization}
\L(\theta) \triangleq  \log p_\theta(y_{1:T}) = \sum_{t=1}^T \log  p_\theta(y_t|y_{1:t-1}) = \sum_{t=1}^T \log  Z_t(\theta).
\end{equation}
Analytical expressions of the filtering distributions $p_\theta(x_t|y_{1:t})$ and the data log-likelihood $\L(\theta)$ are only available for a small class of SSMs, which includes linear-Gaussian and discrete SSMs \cite{kalman1960new,papaspiliopoulos2014optimal}. Outside these special cases, filtering algorithms need to be employed to approximate the filtering distributions, and these algorithms can be leveraged to estimate the log-likelihood.


\section{Ensemble Kalman Filter Estimation of the Log-likelihood and its Gradient}\label{sec:logandgrad}
In this section, we briefly review EnKFs and how they can be used to obtain an estimate $\LEnKF(\theta)$ of the log-likelihood $\L(\theta)$. 
As will be detailed in \cref{sec:mainalgorithm}, the map $\theta \mapsto \LEnKF(\theta)$ can be readily auto-differentiated to compute $\nt \LEnKF(\theta)$, and this gradient can be used to learn the parameter $\theta$. 
\Cref{ssec:enkf} gives background on EnKFs, \cref{ssec:approxlog-likelihood} shows how EnKFs can be used to estimate $\L(\theta)$, and \cref{ssec:convergence} contains novel convergence guarantees for the EnKF estimation of $\L(\theta)$ and $\nt \L(\theta).$

\subsection{Ensemble Kalman Filters}\label{ssec:enkf}
Given $\theta = \{\alpha, \beta\},$ the EnKF algorithm \cite{evensen1994sequential, evensen2009data} sequentially approximates  the filtering distributions $p_\theta(x_t|y_{1:t})$ using $N$ equally-weighted particles $\ton{x}.$ 
At forecast steps, each particle $\tn{x}$ is propagated using the state transition equation \cref{eq:main_state}, while at analysis steps a Kalman-type update is performed for each particle:
\begin{alignat}{3}
\text{(forecast step)}& \quad\quad \tnf{x} = F_\alpha(\tmn{x}) + \tn{\xi}, &&\quad\quad \tn{\xi}\iidsim \Nc(0, Q_\beta), \label{eq:EnKF_forecast}\\
\text{(analysis step)}& \quad\quad \tn{x} = \tnf{x} + \widehat K_t (y_t + \tn{\gamma} - H \tnf{x}), &&\quad \quad \tn{\gamma} \iidsim  \Nc(0, R). \label{eq:EnKF_filtering}
\end{alignat}

Note that the particles $x_{0:T}^{1:N}$ depend on $\theta,$ and \eqref{eq:EnKF_forecast}-\eqref{eq:EnKF_filtering} implicitly define a map $\theta \mapsto x_{0:T}^{1:N}.$
The Kalman gain $\widehat K_t \triangleq \widehat C_t H^{\top} (H \widehat C_t H^{\top} + R)^{-1}$ is defined using the empirical covariance $\widehat{C}_t$ of the forecast ensemble $\ton{\widehat x},$ namely
\begin{equation}\label{eq:empiricalmeancov}
    \widehat C_t = \frac{1}{N-1} \sum_{n=1}^N (\tnf{x} - \widehat m_t) (\tnf{x} - \widehat m_t)^{\top}, \quad \text{where} \quad \widehat m_t = \frac{1}{N} \sum_{n=1}^N \tnf{x}.
\end{equation}
These empirical moments provide a Gaussian approximation to the \emph{forecast distribution} 
\begin{equation}\label{eq:forecastdist}
    p_\theta(x_t|y_{1:t-1}) \approx \Nc(\widehat m_t, \widehat C_t).
\end{equation}
Several implementations of EnKF are available, but for concreteness we only consider the ``perturbed observation'' 
EnKF defined in \eqref{eq:EnKF_forecast}-\eqref{eq:EnKF_filtering}. In the analysis step \eqref{eq:EnKF_filtering}, the observation $y_t$ is perturbed to form $y_t + \tn{\gamma}.$  
This perturbation ensures that in linear-Gaussian models the empirical mean and covariance of $\ton{x}$ converges as $N\rightarrow \infty$ to the mean and covariance of the filtering distribution  \cite{le2009large,law2016deterministic}.

\subsection{Estimation of the Log-Likelihood and its Gradient}\label{ssec:approxlog-likelihood}
Note from \eqref{eq:likecharacterization} that in order to approximate $\L(\theta) = \log p_\theta(y_{1:T})$, it suffices to approximate $p_\theta(y_t|y_{1:t-1})$ for $1\le t \le T$. Now, using \eqref{eq:normalizing} and the EnKF approximation \eqref{eq:forecastdist} to the forecast distribution, we obtain
\begin{equation}
    p_\theta(y_t|y_{1:t-1}) \approx \int \Nc(y_t; H x_t, R) \Nc(x_t; \widehat m_t, \widehat C_t) \dd x_t = \Nc \big(y_t; H \widehat m_t, H \widehat C_t H^{\top} + R\big).
\end{equation}
Therefore, we have the following estimate of the data log-likelihood:
\begin{equation}\label{eq:EnKF_loglike}
   \LhEnKF(\theta) \triangleq \sum_{t=1}^T \log \Nc \big(y_t; H \widehat m_t, H \widehat C_t H^{\top} + R\big) \approx \L(\theta).
\end{equation}
Notice that the forecast empirical moments $\{\widehat m_t, \widehat C_t\}_{t=1}^T$, and hence $\LhEnKF(\theta),$ depend on $\theta$ in two distinct ways. First, each forecast particle $\tnf{x}$ in \eqref{eq:EnKF_forecast} depends on a particle $\tn{x},$ which indirectly depends on $\theta.$ Second, each forecast particle depends on $\theta = \{\alpha,\beta\}$ directly through $F_\alpha$ and $Q_\beta$.
The estimate $\LhEnKF(\theta)$  can be computed online with EnKF. 
The whole procedure is summarized in Algorithm \ref{alg:EnKF}, which implicitly defines a map
$\theta \mapsto \LhEnKF(\theta).$ 
Before discussing the autodiff of this map and learning of the parameter $\theta$ in \cref{sec:mainalgorithm}, we establish the large ensemble convergence of $\LhEnKF(\theta)$ and $\nt \LhEnKF(\theta)$ towards $\L(\theta)$ and $\nt \L(\theta)$ in a linear setting. 

\begin{algorithm}
\caption{Ensemble Kalman Filter and Log-likelihood Estimation}
\label{alg:EnKF}
\begin{algorithmic}[1]
\Statex {\bf Input}:  $\theta = \{\alpha,\beta\}, y_{1:T}, x_0^{1:N}.$ (If $x_0^{1:N}$ is not specified, draw $x_0^n \iidsim  p_0(x_0)$.)
\State  {\bf Initialize} $\LhEnKF(\theta)=0.$ \label{eq:EnKF_init_alg}
\For {$t= 1, \ldots, T$}
\State Set $\tnf{x} = F_\alpha(\tmn{x}) + \tn{\xi}$, where $\tn{\xi}\iidsim \Nc(0, Q_\beta)$. \label{eq:EnKF_forecast_alg} \Comment{Forecast step}
\State Compute $\widehat m_t, \widehat C_t$ by \cref{eq:empiricalmeancov} and set $\widehat K_t = \widehat C_t H^{\top} (H \widehat C_t H^{\top} + R)^{-1}.$ \label{eq:EnKF_mean_cov_alg}
\State  Set $\tn{x} = \tnf{x} + \widehat K_t (y_t + \tn{\gamma} - H \tnf{x})$, where $\tn{\gamma}\iidsim \Nc(0, R)$. \label{eq:EnKF-analysis-alg} \Comment{Analysis step}
\State    Set $\LhEnKF(\theta) \leftarrow \LhEnKF(\theta) + \log \Nc\big( y_t; H \widehat m_t, H \widehat{C}_t H^{\top} + R \big).$ 
\EndFor
\Statex {\bf Output}: EnKF particles $x_{0:T}^{1:N}$. Log-likelihood estimate $\LhEnKF(\theta)$.
\end{algorithmic}
\end{algorithm}

\subsection{Large Sample Convergence: Linear Setting}\label{ssec:convergence}
In this section we consider a linear setting and provide large $N$ convergence results for the log-likelihood estimate $\LhEnKF(\theta)$ and its gradient $\nt\LhEnKF(\theta)$ towards $\L(\theta)$ and $\nt \L(\theta)$ for any given $\theta$, for a fixed data sequence $y_{1:T}$. The mappings $\L$ and $\LhEnKF$ are defined in \cref{eq:likecharacterization} and \cref{eq:EnKF_loglike}, respectively. For notation convenience, we drop $\theta$ in the function argument since the main dependence will be on $N$ in this section. Similar to \cite{le2009large,kwiatkowski2015convergence}, we study $L^p$ convergence for any $p\ge 1$.

\begin{theorem}\label{prop:likelihood}
Assume that the state transition \cref{eq:main_state} is linear, i.e., 
\begin{equation}\label{eq:linear_assump_thm}
x_t=A_\alpha x_{t-1}+\xi_t, \quad\quad \xi_t \sim \Nc(0,Q_
\beta), \quad\quad A_\alpha \in \R^{d_x \times d_x},
\end{equation}
and that the initial distribution $p_0$ is Gaussian. Then, for any $\theta = \{\alpha,\beta\}$ and  
for any $p\ge 1,$
$\LhEnKF$ converges to $\L$ in $L^p$  with rate $1/\sqrt N$, i.e.,
\begin{equation}
    \bigl(\Expect \big| \LhEnKF - \L \big|^p \bigr)^{1/p} \le c N^{-1/2},
\end{equation}
where $c$ does not depend on $N$.
\end{theorem}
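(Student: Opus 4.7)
The plan is to reduce the theorem to $L^p$ control of the EnKF forecast empirical moments $(\widehat m_t, \widehat C_t)$ and then propagate these errors through the smooth map that sends a mean-covariance pair to the log of a Gaussian density. In the linear-Gaussian setting of \eqref{eq:linear_assump_thm} the Kalman filter is exact, so the true forecast moments $(m_t, C_t)$ are deterministic and well defined, and we have the closed-form representation $\L = \sum_{t=1}^T \log \Nc(y_t; H m_t, H C_t H^\top + R)$. Comparing this with \eqref{eq:EnKF_loglike} and applying the triangle inequality in $L^p$ reduces the theorem to controlling each term
\begin{equation*}
f_t(\widehat m_t, \widehat C_t) - f_t(m_t, C_t), \qquad f_t(m,C) \triangleq \log \Nc\bigl(y_t;\, Hm,\, HCH^\top + R\bigr),
\end{equation*}
in $L^p$ with rate $N^{-1/2}$, then summing over the finite horizon $T$.

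First, I would invoke the existing linear-Gaussian convergence theory for the perturbed-observation EnKF, specifically the $L^p$ results of Le Gland, Monbet and Tran \cite{le2009large} and Kwiatkowski and Mandel \cite{kwiatkowski2015convergence}. These yield, for each fixed $t$ and every $p\ge 1$, both the convergence bound
\begin{equation*}
\bigl(\Expect |\widehat m_t - m_t|^p\bigr)^{1/p} + \bigl(\Expect |\widehat C_t - C_t|^p\bigr)^{1/p} \le c_t\, N^{-1/2},
\end{equation*}
and uniform-in-$N$ bounds on every polynomial moment of $|\widehat m_t|$ and $|\widehat C_t|$. Because the initial distribution is Gaussian and the perturbations $\xi_t^n, \gamma_t^n$ are Gaussian, these bounds propagate cleanly through the forecast and analysis recursions \eqref{eq:EnKF_forecast}--\eqref{eq:EnKF_filtering}.

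Second, I would establish a local Lipschitz estimate for $f_t$ around the true moments. Writing out $f_t(m,C) = -\tfrac{d_y}{2}\log(2\pi) - \tfrac12 \log\det(HCH^\top + R) - \tfrac12 (y_t - Hm)^\top (HCH^\top + R)^{-1}(y_t - Hm)$, the assumption $R \succ 0$ gives $HCH^\top + R \succeq R$ for any $C\succeq 0$, hence $\|(HCH^\top + R)^{-1}\|$ is uniformly bounded. A mean-value argument then yields
\begin{equation*}
|f_t(\widehat m_t, \widehat C_t) - f_t(m_t, C_t)| \le \Phi_t\bigl(|\widehat m_t|, \|\widehat C_t\|, |m_t|, \|C_t\|\bigr) \bigl(|\widehat m_t - m_t| + |\widehat C_t - C_t|\bigr),
\end{equation*}
where $\Phi_t$ is a polynomial in its arguments coming from differentiating $\log\det$ and the quadratic form. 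Combining this with Hölder's inequality, the polynomial moments of $|\widehat m_t|, \|\widehat C_t\|$ bound $\Phi_t$ in any $L^q$, while the first step controls the difference in $L^{pq/(q-1)}$, giving the claimed $N^{-1/2}$ rate after summing in $t$.

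The main obstacle is verifying that $\Phi_t$ really is in every $L^q$ with constants independent of $N$: the $\log\det$ derivative introduces $\|(HCH^\top + R)^{-1}\|$ (harmless by $R\succ 0$) but also $\|H\widehat C_t H^\top\|$-type factors, and the quadratic form contributes $|y_t - H\widehat m_t|^2$-type factors. This is the step that relies most heavily on the linear-Gaussian structure, since it is exactly there that the ensemble moments inherit Gaussian-like tails from the noise and initial condition and thus admit uniform polynomial moment bounds.
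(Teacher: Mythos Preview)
Your proposal is correct and follows essentially the same route as the paper: both reduce to controlling $h_t(\widehat m_t,\widehat C_t)-h_t(m_t,C_t)$ (your $f_t$), invoke the $L^p$ convergence and uniform moment bounds for $(\widehat m_t,\widehat C_t)$ from \cite{le2009large,kwiatkowski2015convergence}, use a mean-value bound on $h_t$ with the key observation $(HCH^\top+R)^{-1}\preceq R^{-1}$, and close with the $L^p$ Cauchy--Schwarz/H\"older inequality. The paper makes the propagation-of-chaos construction behind the cited bounds explicit and writes out the derivatives of $h_t$, but the argument is the same.
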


 The linearity of the flow $F_\alpha(\cdot)$ is equivalent to the linearity of the vector field $f_\alpha(\cdot)$. Although the convergence of EnKF to the KF in linear settings has been studied in DA \cite{le2009large, law2016deterministic,kwiatkowski2015convergence, del2018stability} and in filtering approaches to inverse problems \cite{schillings2017analysis,chada2020iterative}, there are no existing convergence results for EnKF log-likelihood estimation. An exception is \cite{katzfuss2020ensemble}, where the authors provide a heuristic argument for convergence in the case $T=1$. 
Most of the theoretical analysis of EnKF is based on the \emph{propagation of chaos} statement \cite{mckean1967propagation,sznitman1991topics}: EnKF defines an interacting particle system, where the interaction is through the empirical mean $\widehat m_t$ and covariance matrix $\widehat C_t$ of the forecast ensemble $\ton{\widehat x}$. As $N\rightarrow\infty$, one hopes that these empirical moments can be replaced by their deterministic limits, and that the particles will hence evolve independently. The large $N$ limits of $\widehat m_t, \widehat C_t$ turn out to be the mean and covariance matrix of the KF forecast distribution. We will leave the construction of the {propagation of chaos} statement as well as the proof of \cref{prop:likelihood} to \cref{sec:likeproof-appen}.

Since this paper focuses on gradient based approaches to the learning of  $\theta=\{\alpha,\beta\}$,  it is thus interesting to compare the gradient $\nt \LhEnKF$ to the true gradient $\nt \L$, as $N\rightarrow\infty$, if both of them exist. The intuition is 
that if $\nt \LhEnKF$ is an accurate estimate of $\nt \L$, then one can perform gradient-based optimization over $\LhEnKF$ as if one was directly optimizing over the true log-likelihood $\L$. For the gradient w.r.t.\  $\beta$ to be well-defined, we write $S_\beta = Q_\beta^{1/2}$ 
in the following statement so that $\beta$ does not appear in the stochasticity of the algorithm. This is also known as the ``reparameterization trick,'' which will be discussed later in \cref{sec:grad}.
\begin{theorem}\label{prop:gradient}
Assume that the state transition \cref{eq:main_state} is linear, i.e., 
\begin{equation}
x_t=A_\alpha x_{t-1}+ S_\beta \xi_t, \quad\quad  \xi_t \sim \Nc(0,I_{d_x}), \quad\quad A_\alpha\in \R^{d_x \times d_x},
\end{equation}
and that the initial distribution $p_0$ is Gaussian. Assume the parameterizations $\alpha \mapsto A_\alpha$ and $\beta \mapsto S_\beta$ are differentiable. Then, for any $\theta=\{\alpha,\beta\}$, both $\nt \LhEnKF$ and $\nt \L$ exist and, for any $p \ge 1,$ $\nt \LhEnKF$ converges to $\nt \L$ in $L^p$ with rate $1/\sqrt N$, i.e.,
\begin{equation}
    \bigl(\Expect \big| \nt \LhEnKF - \nt \L \big|^p\bigr)^{1/p} \le c N^{-1/2},
\end{equation}
where $c$ does not depend on $N$.
\end{theorem}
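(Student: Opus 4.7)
The plan is to exploit the reparameterization to make $\LhEnKF$ a $C^1$ random function of $\theta$, and then reduce the $L^p$ convergence of its gradient to an extended propagation-of-chaos statement in the spirit of \cref{prop:likelihood}.

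First, I would verify differentiability. With the reparameterization, $\tnf x = A_\alpha \tmn x + S_\beta \tn\xi$ and $\tn x = \tnf x + \widehat K_t(y_t + \tn\gamma - H\tnf x)$ are driven by $\theta$-independent noise $\tn\xi \sim \Nc(0,I_{d_x})$ and $\tn\gamma \sim \Nc(0,R)$. The uniform lower bound $H\widehat C_t H^\top + R \succeq R \succ 0$ makes the Kalman gain $\widehat K_t$ and the Gaussian density appearing in \cref{eq:EnKF_loglike} smooth in $(\widehat m_t, \widehat C_t)$ and hence in $\theta$, so $\nt \LhEnKF$ exists. Existence of $\nt \L$ follows because the KF log-likelihood has the same functional form in terms of the deterministic KF forecast moments, which are themselves smooth in $\theta$ via $A_\alpha$ and $S_\beta$.

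Second, I would differentiate the EnKF recursions to obtain a coupled linear recursion for the sensitivities $D_t^n \triangleq \pt\tn x$ and $\widehat D_t^n \triangleq \pt\tnf x$, involving $\pt\widehat m_t$, $\pt\widehat C_t$, and $\pt \widehat K_t$ (the last via $\pt X^{-1} = -X^{-1}(\pt X) X^{-1}$). The augmented system $(\tn x, D_t^n)$ is again an exchangeable interacting particle system whose mean-field limit is the KF together with its $\theta$-derivative. Proceeding by induction on $t$ as in the proof of \cref{prop:likelihood}, one would show that the empirical moments of this augmented system converge in $L^p$ to the KF moments and their $\theta$-derivatives at rate $1/\sqrt N$. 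The base case is immediate since $x_0^n$ is i.i.d.\ Gaussian and $D_0^n \equiv 0$; the inductive step combines the $L^p$ estimates already available for $(\widehat m_t,\widehat C_t)$ with the linearity of the sensitivity recursion and a Lipschitz estimate for $C\mapsto CH^\top(HCH^\top+R)^{-1}$ uniform in $C\succeq 0$, which follows from $R\succ 0$.

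Finally, the chain rule gives $\nt \LhEnKF = \sum_{t=1}^T G(y_t, \widehat m_t, \widehat C_t;\pt\widehat m_t, \pt\widehat C_t)$ for an explicit function $G$ that is polynomial in its arguments and in $(HCH^\top+R)^{-1}$, and an identical formula holds for $\nt\L$ with KF moments. A first-order Taylor expansion of $G$ around the KF quantities, combined with Hölder's inequality and the uniform lower bound on $HCH^\top+R$, transfers the $L^{2p}$ rates from the previous step to the claimed $L^p$ rate $1/\sqrt N$ on the gradient. The main obstacle is the inductive step above: obtaining uniform-in-$N$ $L^p$ control of the sensitivities $D_t^n$ and of $\pt\widehat K_t$, both of which depend on the empirical covariance through matrix inverses. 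The positive-definite lower bound $H\widehat C_t H^\top + R \succeq R$ and the linear structure of the dynamics are what keep these quantities well-behaved; I expect the bookkeeping of the augmented propagation-of-chaos estimates, rather than the conceptual structure, to be the main challenge.
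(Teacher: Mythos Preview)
Your proposal is correct and follows essentially the same route as the paper: reduce to a scalar direction, differentiate the reparameterized EnKF recursions, run an induction on $t$ to propagate $L^p$ control of the particle sensitivities (using $H\widehat C_t H^\top + R \succeq R \succ 0$ to handle the matrix inverses), and then apply the chain rule to $h_t(m,C)=\log\Nc(y_t;Hm,HCH^\top+R)$. The only cosmetic difference is packaging: the paper works directly with ``substitute particles'' $\tnf{\mf x}$ and their $\theta$-derivatives and compares term by term, whereas you phrase the induction as propagation of chaos for the augmented pair $(\tn x, D_t^n)$; these are the same argument.
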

An important observation is that $\theta$ only enters the objective function $\LhEnKF$ through the empirical mean $\widehat m_t$ and covariance matrix $\widehat C_t$ of the forecast ensemble. As $N\rightarrow\infty$, one hopes that these empirical moments can be replaced by their deterministic limits, and gradients based on these empirical moments can be replaced by gradients based on their deterministic limits.
The gradients taken in the limits turn out to be those of the true log-likelihood $\L$. Again, the proof relies on the propagation of chaos statement and is left to \cref{sec:gradproof-appen}.

\section{Auto-differentiable Ensemble Kalman Filters}\label{sec:mainalgorithm}
\sloppypar
This section contains our main methodological contributions. We introduce our AD-EnKF framework in \cref{sec:grad}. We then describe in \cref{sec:TBPTT} how to handle long observation data, i.e., large $T$, using TBPTT. In \cref{sec:localization}, we highlight how various techniques introduced for EnKF in the DA community, e.g., localization and covariance inflation, can be incorporated into our framework. Finally, \cref{ssec:cost} discusses the computational and memory costs.

\subsection{Main Algorithm}\label{sec:grad}
\begin{algorithm}
\caption{Auto-differentiable Ensemble Kalman Filter (AD-EnKF)}
\label{alg:AD-EnKF}
\begin{algorithmic}[1]
\Statex {\bf Input}:  Observations $y_{1:T}$. Learning rate $\eta$. 
\State {\bf Initialize} SSM parameter $\theta^0$ and set $k=0.$
\While {not converging}
\State $x_{0:T}^{1:N} , \LhEnKF(\theta^k) = \textsc{EnsembleKalmanFilter}(\theta^k, y_{1:T}).$ \label{eq:EnKF_main_1}
\Comment{\cref{alg:EnKF}}
\State Compute $\nt \LhEnKF(\theta^k)$ by auto-differentiating the map $\theta^k \mapsto \LEnKF(\theta^k).$
\State Set $\theta^{k+1} = \theta^{k} + \eta \nt \LhEnKF(\theta^k)$ and $k\leftarrow k+1.$
\EndWhile
\Statex {\bf Output}: Learned SSM parameter $\theta^k$  and EnKF particles $x_{0:T}^{1:N}$.
\end{algorithmic}
\end{algorithm}


Our core method is shown in \cref{alg:AD-EnKF}, and our PyTorch implementation is at \url{https://github.com/ymchen0/torchEnKF}.
The gradient of  the map $\theta^k \mapsto \LEnKF(\theta^k)$ can be evaluated 
using autodiff libraries  \cite{paszke2019pytorch,jax2018github,abadi2016tensorflow}. 
More specifically, reverse-mode autodiff can be performed for common matrix operations like matrix multiplication, inverse, and determinant  \cite{giles2008collected}. 
We use the ``reparameterization trick'' \cite{kingma2013auto,rezende2014stochastic} to auto-differentiate through the stochasticity in the EnKF algorithm.
Specifically, in \cref{alg:EnKF} line \ref{eq:EnKF_forecast_alg},  we draw $\tn{\xi}$ from a distribution $\Nc(0,Q_\beta)$ that involves a parameter $\beta$ with respect to which we would like to compute the gradient. For this operation to be compatible with the autodiff, we reparameterize
\begin{equation}\label{eq:reparam}
    \widehat x_t^n =F_\alpha(x_t^n)+\tn{\xi}\quad \tn{\xi} \iidsim \Nc(0,Q_\beta) \quad \Longleftrightarrow \quad \widehat x_t^n =F_\alpha(x_t^n)+ Q_\beta^{1/2} \tn{\xi} \quad \xi_t^n \iidsim \Nc(0, I_{d_x}),
\end{equation}
so that the gradient with respect to $\beta$ admits an  unbiased estimate. In contrast to the EnKF, the resampling step of PFs cannot be readily auto-differentiated \cite{naesseth2018variational,maddison2017filtering,le2017auto}.

\subsection{Truncated Gradients for Long Sequences}\label{sec:TBPTT}
If the sequence length $T$ is large, although $\LhEnKF(\theta)$ and its gradient $\nt \LhEnKF(\theta)$ can be evaluated using the aforementioned techniques, the practical value of  \cref{alg:AD-EnKF} is limited for two reasons. First, computing these quantities requires a full filtering pass of the data, which may be computationally costly. Moreover, for the gradient ascent methods to achieve a good convergence rate, multiple evaluations of gradients are often needed, requiring an equally large number of filtering passes. The second reason is that, like recurrent networks, 
\cref{alg:AD-EnKF}
may suffer from  exploding or vanishing gradients \cite{pascanu2013difficulty} as the derivatives are multiplied together using chain rules in the backpropagation.

Our proposed technique can address both of these issues by borrowing the ideas of TBPTT from the recurrent neural network literature  \cite{williams1995gradient, sutskever2014sequence} and the recursive maximum likelihood method from the hidden Markov models literature \cite{le1997recursive}. The idea is to divide the sequence into subsequences of length $L$. Instead of computing the log-likelihood of the whole sequence and then backpropagating, one computes the log-likelihood of each subsequence and backpropagates within that subsequence. The subsequences are processed sequentially, and the EnKF output of the previous subsequence (i.e., the location of particles) are used as the input to the next subsequence. In this way, one performs $\lceil T/L \rceil$ gradient updates in a \emph{single} filtering pass, and since the gradients are backpropagated across a time span of length at most $L$, gradient explosion/vanishing is more unlikely to happen. This approach is detailed in \cref{alg:AD-EnKF-T}.

\begin{algorithm}
\caption{AD-EnKF with Truncated Backprop (AD-EnKF-T)}
\label{alg:AD-EnKF-T}
\begin{algorithmic}[1]
\Statex {\bf Input}:  Observations $y_{1:T}$. Learning rate $\eta$. Subsequence length $L$.
\State {\bf Initialize} SSM parameter $\theta^0$ and set $k=0.$
\While {not converging}
\State Set $x_0^n \iidsim p_0(x_0)$.
\For {$j=0,\ldots, T/L-1$}
\State Set $t_0=jL$, $t_1=\min \{(j+1)L, T\}$.
\State $x_{t_0:t_1}^{1:N}, \LhEnKF(\theta^k) = \textsc{EnsembleKalmanFilter}(\theta^k, y_{(t_0+1):t_1}, x_{t_0}^{1:N}).$ \Comment{\cref{alg:EnKF}}
\State Set $\theta^{k+1} = \theta^{k} + \eta \nt \LhEnKF(\theta^k)$ and $k \leftarrow k+1.$ 
\EndFor
\EndWhile
\Statex {\bf Output}: Learned SSM parameter $\theta^k$  and EnKF particles $x_{0:T}^{1:N}$.

\end{algorithmic}
\end{algorithm}

\subsection{Localization for High State Dimensions}\label{sec:localization}
In practice, the state often represents a physical quantity that is discretized in spatial coordinates (e.g., numerical solution to a time-evolving PDE), which leads to a high state dimension $d_x$. In order to reduce the computational and memory complexity, EnKF is often run with $N < d_x$. A small ensemble size $N$ causes rank deficiency of the forecast sample covariance $\widehat C_t$, which may cause spurious correlations between spatial coordinates that are far apart. In other words, for $(i,j)$ such that $|i-j|$ is large, the $(i,j)$-th coordinate of $\widehat C_t$ may not be close to 0, although one would expect it to be small since it represents the correlation between spatial locations that are far apart. This problem can be addressed using localization techniques, and we shall focus on \emph{covariance tapering}  \cite{houtekamer1998data}. The idea is to ``taper'' the forecast sample covariance matrix $\widehat C_t$ so that the nonzero spurious correlations are zeroed out. This method is implemented defining a $d_x \times d_x$ matrix $\rho$ with 1's on the diagonal and entries smoothly decaying to $0$ off the diagonal, and  replacing the forecast sample covariance matrix $\widehat C_t$ in \cref{alg:EnKF} by $\rho \circ \widehat C_t$, where $\circ$ denotes the element-wise matrix product. Common choices of $\rho$ were introduced in \cite{gaspari1999construction}. Covariance tapering can be easily adopted within our AD-EnKF framework. 
We find that covariance tapering not only stabilizes the filtering procedure, which had been noted before, e.g., \cite{houtekamer2001sequential,hamill2001distance}, but it also helps to obtain low-variance estimates of the log-likelihood and its gradient --- see the discussion in \cref{sec:linear_loc}. Localization techniques relying on local serial updating of the state \cite{houtekamer2001sequential,ott2004local,sakov2011relation} could also be considered.

Another useful tool for EnKF with $N<d_x$ is \emph{covariance inflation} \cite{anderson1999monte}, which prevents the ensemble from collapsing towards its mean after the analysis update \cite{furrer2007estimation}. In practice, this can be performed by replacing the forecast sample covariance matrix $\widehat C_t$ in \cref{alg:EnKF} by $(1+\zeta) \widehat C_t$, where $\zeta>0$ is a small constant that needs to be tuned. Although not considered in our experiments, covariance inflation can also be easily adopted within our AD-EnKF framework.

\subsection{Computation and Memory Costs}\label{ssec:cost}

Autodifferentiation of the map $\theta^k \mapsto \LhEnKF(\theta^k)$ in \cref{alg:AD-EnKF} does not introduce an extra order of computational cost compared to the evaluation of this map alone. Thus, the computational cost of AD-EnKF is at the same order as that of a standard EnKF. The computation cost of EnKF can be found in, e.g., \cite{roth2017ensemble}. Moreover, AD-EnKF can be parallelized and speeded up with a GPU.

Like a standard EnKF, when no covariance tapering is applied, AD-EnKF has $O(N d_x)$ memory cost since it does not explicitly compute the sample covariance matrix $\widehat C_t$\footnote{Note that 
    $\tf{C} H^\top = \frac{1}{N-1} \sum_{n=1}^N (\tnf{x} - \tf{m})(H\tnf{x}-H\tf{m})^\top$
and
    $H \tf{C} H^\top = \frac{1}{N-1} \sum_{n=1}^N (H\tnf{x} - H \tf{m})(H\tnf{x}-H\tf{m})^\top$, 
which require $O(d_x \max\{N,d_y\})$ and $O(d_y \max\{N, d_y\})$ memory respectively. Both of them are less than $O(d_x^2)$ if $d_y\ll d_x$ and $N \ll d_x$.
}. With covariance tapering, the memory cost is at most $O(\max\{N,r\} d_x)$, where $r$ is the tapering radius, if the tapering matrix $\rho$ is sparse with $O(r d_x)$ nonzero entries. This sparsity condition is satisfied when using common tapering matrices \cite{gaspari1999construction}. In terms of the time dimension,  the memory cost of AD-EnKF can be reduced from $O(T)$ to $O(L)$ with the TBPTT in \cref{sec:TBPTT}. Unlike previous work on EM-based approaches \cite{brajard2020combining,bocquet2020bayesian,pulido2018stochastic}, where the locations of all particles $x_t^{1:N}$ across the whole time span of $T$ need to be stored, AD-EnKF-T only requires to store the particles within a time span of $L$ to perform a gradient step.

If the transition map $F_\alpha$ is defined by the flow map of an ODE with vector field $f_\alpha$, we can use adjoint methods to differentiate  efficiently through $F_\alpha$ in the forecast step \cref{eq:EnKF_forecast}. Use of the adjoint method is facilitated by NeuralODE autodiff libraries   \cite{chen2018neural} that have become an important tool to  learn continuous-time dynamical systems  \cite{ayed2019learning,rubanova2019latent,de2019gru}. Instead of discretizing $F_\alpha$ with a numerical solver applied to $f_\alpha$
and differentiating through solver's steps as in \cite{brajard2020combining,bocquet2020bayesian}, we directly differentiate through $F_\alpha$ by solving an adjoint differential equation, which does not require us to store all intermediate steps from the numerical solver, reducing the memory cost. More details can be found in \cite{chen2018neural}, and the PyTorch package provided by the authors can be incoporated within our AD-EnKF framework with minimal effort.



\section{Numerical Experiments}\label{sec:experiments}
\subsection{Linear-Gaussian Model}\label{sec:linear}
In this section, we focus on parameter estimation in a linear-Gaussian model with a banded structure on model dynamic and model error covariance matrix. This experiment falls into the category of ``parameterized dynamics'' in \cref{ssec:setting}. We first illustrate the convergence results of the log-likelihood estimate $\LhEnKF$ and gradient estimate $\nt\LhEnKF$ presented in \cref{ssec:convergence}, since the true values $\L$ and $\nt \L$ are available in closed form. We also show that the localization techniques described in \cref{sec:localization} lead to a more accurate estimate when the ensemble size is small. 
Finally, we show that having a more accurate estimate, especially for the gradient, improves the parameter estimation.

We  compare the EnKF to PF methods. Similar to the EnKF, the PF also provides an estimate of the log-likelihood and its gradient. Different from \cite{naesseth2018variational,maddison2017filtering,le2017auto}, we adopt the PF with optimal proposal \cite{doucet2009tutorial} as it is implementable for the family of SSMs considered in this paper \cite{doucet2000sequential,sanzstuarttaeb}, and we find it to be more stable than separately training a variational proposal. To compute the log-likelihood gradient for the PF, we follow the same strategy as in \cite{naesseth2018variational,maddison2017filtering,le2017auto} and do not differentiate through the resampling step. The full algorithm,  which we abbreviate as AD-PF, is presented in \cref{sec:pf-appen}.

We consider the following SSM, similar to \cite{xu2007estimation, stroud2018bayesian}
\begin{subequations}
\label{eq:linGauss}
\begin{alignat}{3}
    x_t &= A_\alpha x_{t-1} + \xi_t, \quad\quad && \xi_t \sim \Nc(0, Q_\beta), \quad\quad && 1\le t \le T,\\
    y_t &= H x_t + \eta_t, \quad\quad && \eta_t \sim \Nc(0, 0.5 I_{d_y}), \quad\quad && 1\le t \le T,\\
    x_0 &\sim \Nc (0, 4  I_{d_x}),
\end{alignat}
where 
\begin{equation}
A_\alpha=
\begin{bmatrix}
\alpha_1 & \alpha_2 &          & 0       \\
\alpha_3 & \alpha_1 & \ddots   &         \\
         & \ddots   & \ddots   & \alpha_2\\
0        &          & \alpha_3 & \alpha_1\\
\end{bmatrix},\quad\quad
[Q_\beta]_{i,j} = \beta_1 \exp (-\beta_2 |i-j| ).
\end{equation}
\end{subequations}
Here $[Q_\beta]_{i,j}$ denotes the $(i,j)$-th entry of $Q_\beta$.  Intuitively, $\beta_1$ controls the scale of error, while $\beta_2$ controls how error is correlated across spatial coordinates. We set $\alpha = (\alpha_1, \alpha_2, \alpha_3),$ $\beta=(\beta_1, \beta_2),$ and $\theta=\{\alpha, \beta\}$.

\subsubsection{Estimation Accuracy of $\LhEnKF$ and $\nt \LhEnKF$}\label{sec:linear_converge}

As detailed above, a key idea proposed in this paper is to estimate $\L(\theta)$, $\na \L(\theta)$ and $\nb \L(\theta)$ with quantities $\LhEnKF(\theta)$, $\na \LhEnKF(\theta)$ and $\nb \LhEnKF(\theta)$ obtained by running an EnKF and differentiating through its computations using autodiff. 
Since these estimates will be used by AD-EnKF to perform gradient ascent, it is critical to assess their accuracy. We do so in this section for a range of values of $\theta.$

We first simulate observation data $y_{1:T}$ from the true model with $d_x=d_y\in \{20, 40, 80\}$, $T=10$, $H=I_{d_x}$, $\alpha^*=(0.3, 0.6, 0.1)$ and $\beta^*=(0.5, 1)$. Given data $y_{1:T}$, the true data log-likelihood $\L(\theta)=p_\theta(y_{1:T})$ and gradient $\nabla_\theta \L(\theta)$, which can be decomposed into $\na \L(\theta)$, $\nb \L(\theta)$, can be computed analytically.
We perform $P=50$ EnKF runs, and report a Monte Carlo estimate of the relative $L^2$ errors of the log-likelihood and gradient estimates (see \cref{sec:implement-appen} for their definition)  as the ensemble size $N$ increases.
\cref{fig:linear_converge} shows the results when $\theta$ is evaluated at the true parameters $\{\alpha^*, \beta^*\}$. Intuitively, this $\theta$ is close to optimal since it is the one that generates the data. We also show in \cref{fig:linear_converge_off} in \cref{sec:addfig-appen} the results when $\theta$ is evaluated at a parameter that is not close to optimal: $\alpha=(0.5, 0.5, 0.5), \beta=(1, 0.1)$. Both figures illustrate that the relative $L^2$ estimation errors of the log-likelihood and its gradient computed using EnKF converge to zero at a rate of approximately $N^{-1/2}$. Moreover,  the state dimension $d_x$ has small empirical effect on the convergence rate. On the other hand, those computed using PF have a slower convergence rate or barely converge, especially for the gradient (see the third plot in \cref{fig:linear_converge_off}). We recall that the resampling parts are discarded from the autodiff of PFs, which introduces a bias. Moreover, the empirical convergence rate is slightly slower in higher state dimensions. Comparing the estimation error of EnKF and PF under the same $d_x$ choice, we find that when the number of particles is large ($>500$), EnKF gives a more accurate estimate than PF. However, when the number of particles is small, EnKF is less accurate, but we will show in the next section how the EnKF results can be significantly improved using localization techniques.

\begin{figure}[htbp]
	\centering
	\includegraphics[width=\textwidth]{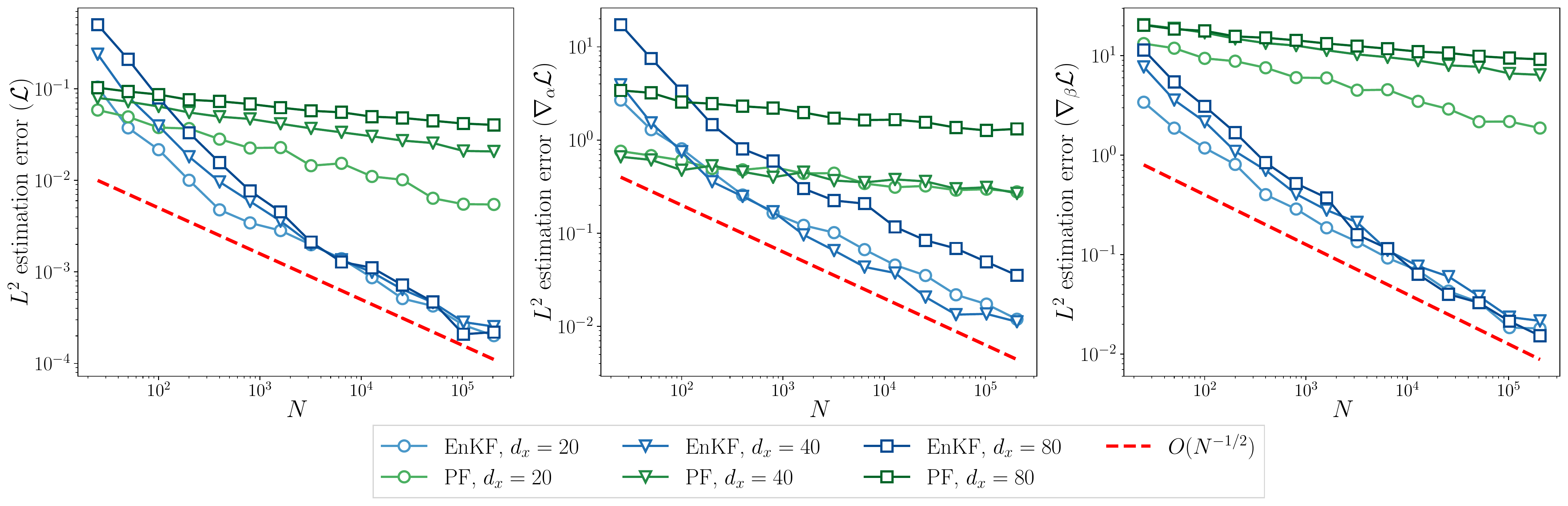}
	\vspace*{-7mm}
	\caption{Relative $L^2$ estimation errors of the log-likelihood (left) and its gradient w.r.t.\  $\alpha$ (middle) and $\beta$ (right), computed using EnKF and PF, as a function of $N$, for the linear-Gaussian model  \eqref{eq:linGauss}. State dimension $d_x\in\{20,40,80\}$. $\theta$ is evaluated at the true parameters $\{\alpha^*, \beta^*\}$. (\cref{sec:linear_converge})}
	 \label{fig:linear_converge}
\end{figure}

\subsubsection{Effect of Localization}\label{sec:linear_loc}
In practice, for computational and memory concerns, the number of particles used for EnKF is typically small ($<100$), and hence it is necessary to get an accurate estimate of log-likelihood and its gradients using a small number of particles. We use the covariance tapering techniques discussed in \cref{sec:localization}, where $\widehat C_t$ is replaced by $\rho \circ \widehat C_t$ in \cref{alg:EnKF}, and $\rho$ is defined using the fifth order piecewise polynomial correlation function of Gasperi and Cohn \cite{gaspari1999construction}. The detailed construction of $\rho$ is left to \cref{sec:implement-appen}, with a hyperparameter $r$ that controls the tapering radius.

\cref{fig:linear_converge_loc} shows the estimation results when the state dimension is set to be $d_x=80$ and $\theta$ is evaluated at $(\alpha^*,\beta^*)$, while different tapering radii $r$ are applied. The plots of EnKF with no tapering and the plots of PF are the same as in \cref{fig:linear_converge}. We find that covariance tapering can reduce the estimation error of the log-likelihood and its gradient when the number of particles is small. Moreover, having a smaller tapering radius leads to a better estimation when the number of particles is small.
As the number of particles grows larger, covariance tapering may worsen the estimation of both log-likelihood and its gradient. This is because the sampling error and spurious correlation that occurs in the sample covariance matrix in EnKF will be overcome by large number of particles, and hence covariance tapering will only act as a modification to the objective function $\LhEnKF$, leading to inconsistent estimates. However, there is no reason for using localization  when one can afford a large number of particles. 
When computational constraints require fewer particles than state dimension, we find that covariance tapering is not only beneficial to the parameter estimation problems but is also beneficial to
learning of the dynamics in high dimensions, as we will show in later sections. Results when $\theta$ is evaluated at parameters that are not optimal ($\alpha=(0.5, 0.5, 0.5)$, $\beta=(1,0.1)$) are shown in \cref{fig:linear_converge_loc_off} in \cref{sec:addfig-appen}, where the beneficial effect of tapering is evident. 

\begin{figure}[htbp]
	\centering
	\includegraphics[width=\textwidth]{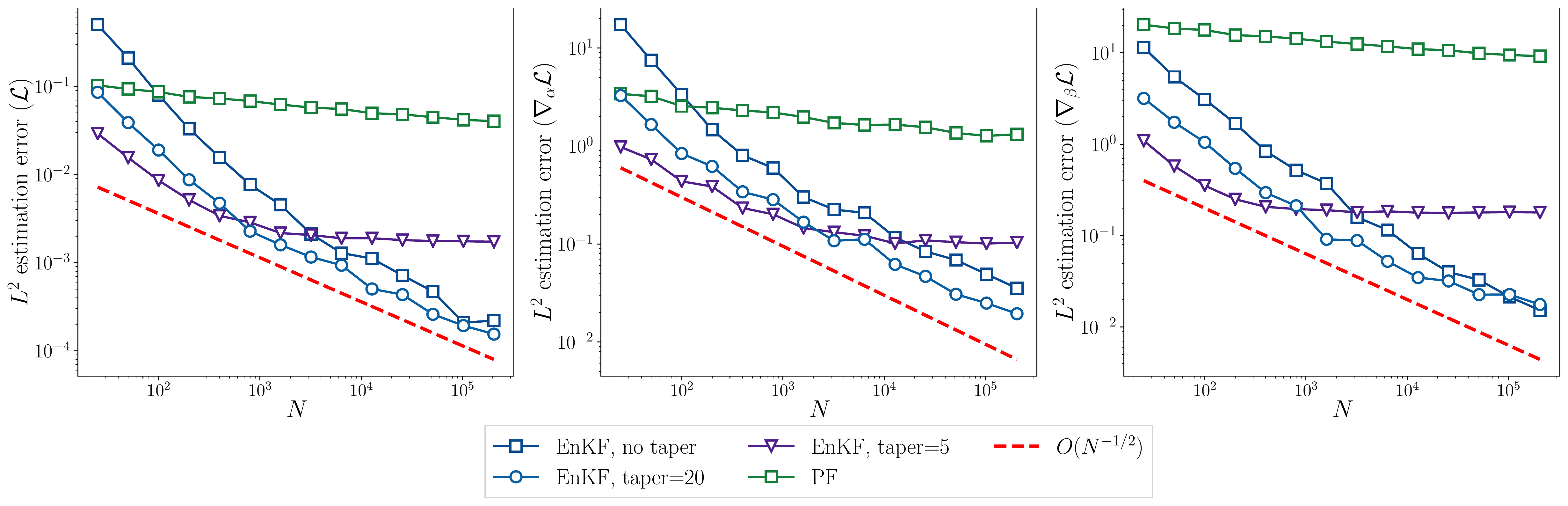}
	\vspace*{-7mm}
	\caption{Relative $L^2$ estimation errors of log-likelihood (left) and its gradient w.r.t.\ $\alpha$ (middle) and $\beta$ (right), computed using EnKF and PF, with different covariance tapering radius applied to EnKF for the linear-Gaussian model  \eqref{eq:linGauss}. State dimension $d_x=80$. $\theta$ is evaluated at the true parameters $\{\alpha^*, \beta^*\}$. (\cref{sec:linear_loc})}
	 \label{fig:linear_converge_loc}
\end{figure}

\subsubsection{Parameter Learning}
\label{sec:paramLin}
Here we illustrate how the estimation accuracy of the  log-likelihood and its gradient,  especially the latter, affect the parameter learning with AD-EnKF. Since our framework relies on gradient-based learning of parameters, intuitively, the less biased the gradient estimate is, the closer our learned parameter will be to the true MLE solution. 

We first consider the setting where the state dimension is set to be $d_x=80$.  We run AD-EnKF for 1000 iterations with gradient ascent under the following choices of ensemble size and tapering radius: (1) $N=1000$ with no tapering; (2) $N=50$ with no tapering; and (3) $N=50$ with tapering radius 5. We also run AD-PF with $N=1000$ particles. Throughout, one ``training iteration'' corresponds to processing once the whole data sequence.
Additional implementation details are available in the appendices. \cref{fig:linear_param_est_1,fig:linear_param_est_2} show a single run of parameter learning under each setting, where we include for reference the MLE obtained by running gradient ascent until convergence with the true gradient $\nt \L$ (denoted with the red dashed line). 
The objective function, i.e., the likelihood estimates $\LhEnKF$ and $\LhPF$ are also plotted as a function of training iterations. 
Results with other choices of state dimension $d_x$ are summarized in \cref{tb:linear_param_est}, where we take the values of $\alpha$ at the final iteration and compute their distance to the true MLE solution. The procedure is repeated 10 times, and the mean and standard deviations are reported. 
The results all show a similar trend: AD-EnKF with $N=1000$ particles performs the best (small errors and small fluctuations) for all settings, while AD-EnKF with $N=50$ particles and covariance tapering performs second best. AD-EnKF with $N=50$ without covariance tapering comes at the third place, and AD-PF method performs the worst, indicating the superiority of AD-EnKF method to the AD-PF method for high-dimensional linear-Gaussian models of the form \eqref{eq:linGauss}
and the utility of localization techniques.
Importantly, the findings here are consistent with the plots in \cref{fig:linear_converge_loc}.
This behavior is in agreement with the intuition that the estimation accuracy of the log-likelihood gradient determines the parameter learning performance. 


\begin{figure}[htbp]
	\centering
	\includegraphics[width=\textwidth]{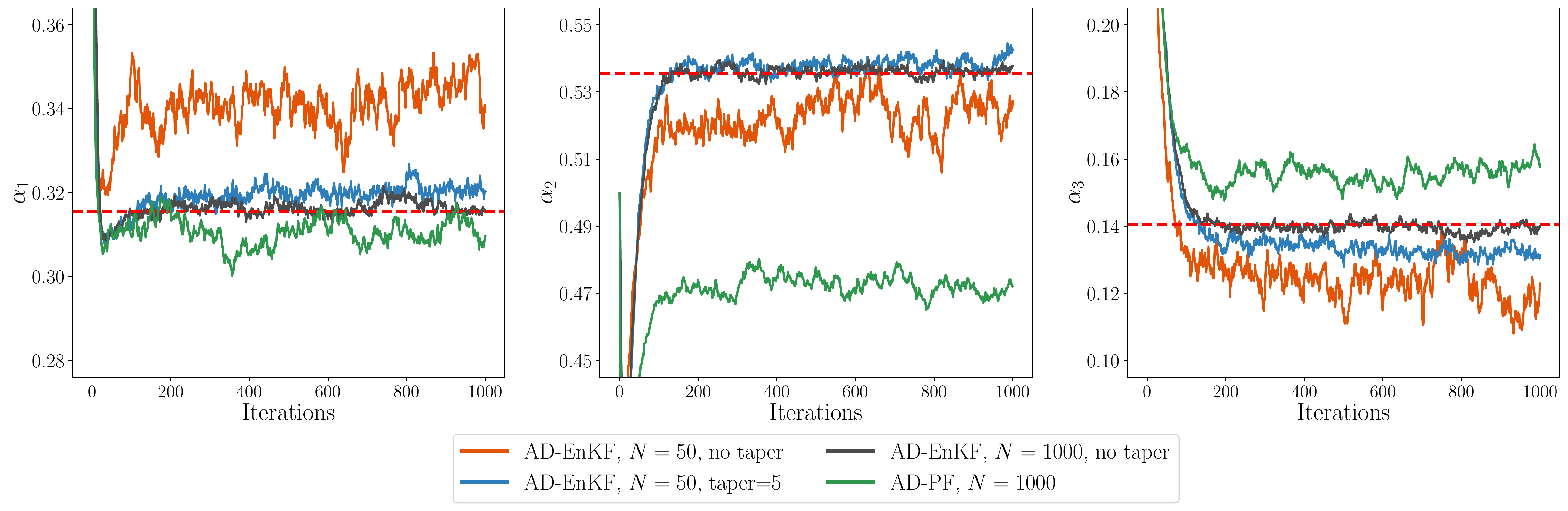}
	\vspace*{-7mm}
	\caption{Learned parameter $\alpha$ as a function of training iterations for the linear-Gaussian model  \eqref{eq:linGauss}. State dimension $d_x=80$. Red dashed lines are the MLE solutions to the true data log-likelihood $\L$.
	(\cref{sec:paramLin})}
	 \label{fig:linear_param_est_1}
\end{figure}
\begin{figure}[htbp]
	\centering
	\includegraphics[width=\textwidth]{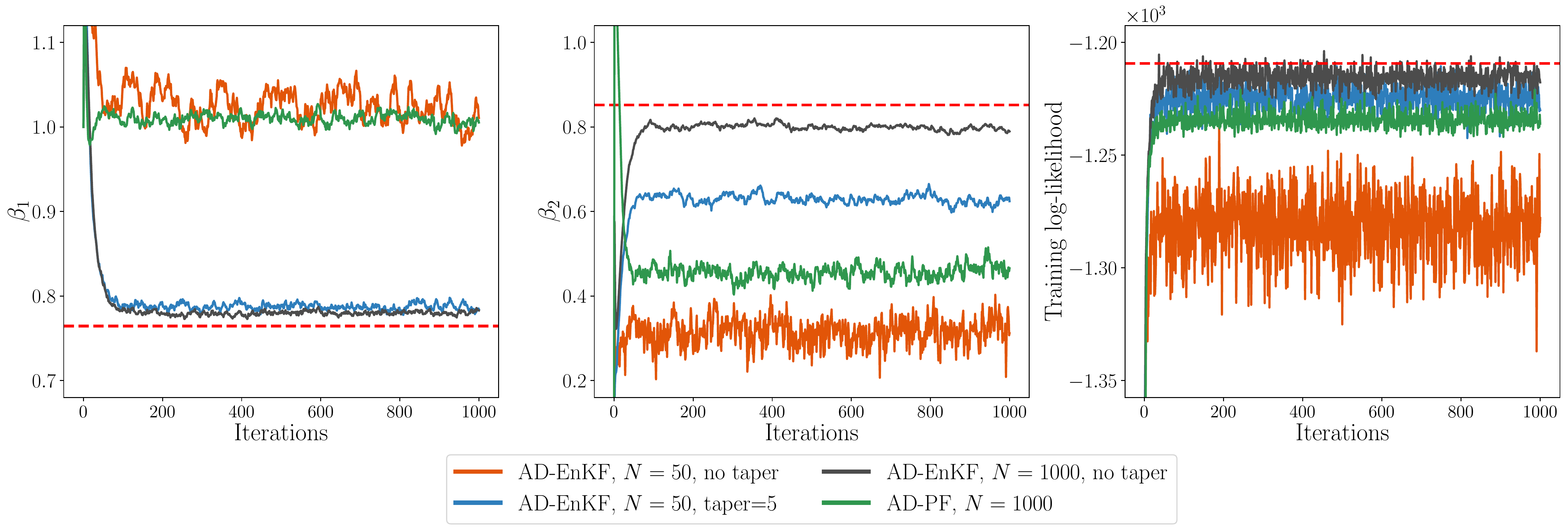}
	\vspace*{-7mm}
	\caption{Learned parameter $\beta$, and  training objective $\LhEnKF$, $\LhPF$ as a function of training iterations for the linear-Gaussian model  \eqref{eq:linGauss}. Red dashed lines are the MLE solutions to the true data log-likelihood $\L$ (left and middle), and the maximum value attained by $\L$ (right). (\cref{sec:paramLin})}
	 \label{fig:linear_param_est_2}
\end{figure}

\begin{table}[htbp]
\centering
{\footnotesize
\begin{tabular}{|c|c|c|c|c|c|c|c|}
\hline
&\ctable{$d_x=20$}{$N=50$} & \ctable{$d_x=20$}{$N=1000$} &
\ctable{$d_x=40$}{$N=50$}&
\ctable{$d_x=40$}{$N=1000$}&
\ctable{$d_x=80$}{$N=50$}&
\ctable{$d_x=80$}{$N=1000$}\\ \hline
{AD-EnKF (no taper)} & 
{1.65 $\pm$0.30}&
{0.07 $\pm$0.06}&
{4.12$\pm$0.73}&
{0.17$\pm$0.09}&
{4.14$\pm$0.67}&
{0.20$\pm$0.14}
\\ \hline
{AD-EnKF(taper=5)}&
{0.53$\pm$0.18}&$-$&
{0.35$\pm$0.27}&$-$&
{1.05}{$\pm$0.38}&$-$
\\ \hline
AD-PF &
{7.75}{$\pm$0.37}&
{3.51}{$\pm$0.35}&
{8.58}{$\pm$0.25}&
{5.59}{$\pm$0.31}&
{9.28}{$\pm$0.49}&
{6.77}{$\pm$0.24}
\\ \hline
\end{tabular}}
\caption{Euclidean distance ($\times 10^{-2}$) from the learned parameter $\alpha$ at the final iteration to the true MLE solution, under varying dimensional settings for the linear-Gaussian model \eqref{eq:linGauss} (\cref{sec:paramLin}).}
\label{tb:linear_param_est}
\end{table}

\subsection{Lorenz-96}
In this section, we illustrate our AD-EnKF framework in the three types of learning problems mentioned in \cref{ssec:setting}: parameterized dynamics, fully-unknown dynamics, and model correction. We will compare our method to AD-PF, as in \cref{sec:linear}. We will also compare our method to the EM-EnKF method implemented in \cite{bocquet2020bayesian,brajard2020combining}, which we abbreviate as EM, and is detailed in \cref{sec:em-appen}.
We emphasize that the gradients computed in the EM are different from the ones computed in AD-EnKF, and in particular do not auto-differentiate through the EnKF. 

The reference Lorenz-96 model \cite{lorenz1996predictability} is defined by \eqref{eq:main_ODE_flow} with vector field
\begin{equation}\label{eq:l96_ode}
    f^{*(i)} (x) = -x^{(i-1)} (x^{(i-2)} - x^{(i+1)}) - x^{(i)} + 8, \quad\quad 0\le i \le d_x-1,
\end{equation}
where $x^{(i)}$ and $f^{*(i)}$ are the $i$-th coordinate of $x$ and component of $f^*$. By convention $x^{(-1)}\triangleq x^{(d_x-1)}, x^{(-2)}\triangleq x^{(d_x-2)}$ and $x^{(d_x)}\triangleq x^{(0)}$. We assume there is no noise in the reference state transition model, i.e., $Q^*=0$. The goal is to recover the reference state transition model with $p_\theta(x_t|x_{t-1})=\Nc(x_t; F_\alpha\big(x_{t-1}), Q_\beta\big)$ from the data $y_{1:T},$ where $F_\alpha$ is the flow map of a vector field $f_\alpha$, and then recover the states $x_{1:T}$. 
The parameterized error covariance $Q_\beta$ in the transition model is assumed to be diagonal, i.e., $Q_\beta=\text{diag}(\beta)$ with $\beta\in \R^{d_x}$. The parameterized vector field $f_\alpha$ is defined differently for the three types of learning problems, as we lay out below.
We quantify performance using the forecast error (RMSE-f), the analysis/filter error (RMSE-a), and the test log-likelihood. These metrics are defined in \cref{sec:implement-appen}.


\subsubsection{Parameterized Dynamics}\label{ssec:l96-param} 
We consider the same setting as in \cite{bocquet2019data}, where 
\begin{equation}\label{eq:l96_param_alpha}
\begin{split}
    f_\alpha^{(i)}(x) = \big[&1, x^{(i-2)}, x^{(i-1)}, x^{(i)}, x^{(i+1)}, x^{(i+2)},\\
    &\big(x^{(i-2)}\big)^2, \big(x^{(i-1)}\big)^2,\big(x^{(i)}\big)^2,\big(x^{(i+1)}\big)^2,\big(x^{(i+2)}\big)^2,\\
    &x^{(i-2)}x^{(i-1)}, x^{(i-1)}x^{(i)}, x^{(i)}x^{(i+1)}, x^{(i+1)}x^{(i+2)},\\
    &x^{(i-2)}x^{(i)}, x^{(i-1)}x^{(i+1)}, x^{(i)}x^{(i+2)} \big]^\top \alpha, \quad\quad 0 \le i \le d_x-1,
\end{split}
\end{equation}
and $\alpha\in\R^{18}$ is interpreted as the coefficients of some ``basis polynomials'' representing the governing equation of the underlying system. The parameterized governing equation of the $i$-th coordinate depends on its $N_1=5$ neighboring coordinates, and the second order polynomials only involve interactions between coordinates that are at most $N_2=2$ indices apart. 
The reference ODE \cref{eq:l96_ode} satisfies $f^*=f_{\alpha^*}$, where $\alpha^*\in\R^{18}$ has nonzero entries 
\begin{equation}\label{eq:alpha-star}
    \alpha^*_0 = 8,\quad \alpha^*_3=-1,\quad \alpha^*_{11}=-1,\quad \alpha^*_{16}=1,
\end{equation}
and zero entries otherwise. Here the dimension of $\theta = \{\alpha,\beta\}$ is $d_\theta=18+d_x$.

We first consider the specific case with $d_x=d_y=40$, $H=I_{40}$. We set $R=I_{40}$ and $x_0 \sim \Nc(0,50 I_{40})$. We generate four sequences of training data with the reference model for $T=300$ with time between consecutive observations $\Delta_s=0.05$. Both flow maps $F^*$ and $F_\alpha$ are integrated using a fourth-order Runge Kutta (RK4) method with step size $\Delta_s^{\text{int}}=0.01$, with adjoint methods implemented for backpropagation through the ODE solver \cite{chen2018neural}.

We use AD-EnKF-T (\cref{alg:AD-EnKF-T})  with $L=20$ and 
covariance tapering \cref{eq:tapering} with radius $r=5$. 
We compare with 
AD-PF-T (see \cref{sec:pf-appen}) with $L=20$
and EM (see \cref{sec:em-appen}).
The implementation details, including the choice of learning rates and other hyperparameters, are discussed in \cref{sec:implement-appen}.

Comparison of the three algorithms is shown in  \cref{fig:l96-param-est}. 
Our AD-EnKF-T  recovers $\alpha^*$ better than the other two approaches. The EM approach converges faster, 
but has a larger error. 
Moreover, EM tends to converge to a higher level of learned model error $\sigma_\beta$ (defined in \cref{eq:diagnosed-error}), while our AD-EnKF-T shows a consistent drop of learned error level.
Note that 
$Q_\beta$ in the learned transition kernel acts like covariance inflation, 
which is discussed in \cref{sec:localization}, but is ``learned'' to be adaptive to the training data rather than manually tuned; therefore, having a nonzero error level $\sigma_\beta$ may still be helpful. The plot of the log-likelihood estimate during training indicates that AD-EnKF-T searches for parameters with a higher log-likelihood than the EM approach,
which is not surprising as AD-EnKF-T directly optimizes $\LhEnKF$, while EM does so by alternatively optimizing a surrogate objective. Also, the large discrepancy between the optimized $\LhEnKF$ and $\LhPF$ objective  may be due to $\LhPF$ being a worse estimate for the true log-likelihood $\L$ than that of $\LhEnKF$. Note that PFs may not be suitable for high-dimensional systems like the Lorenz-96 model. Even with knowledge of the true reference model and a large number of particles, the PF is not able to capture the filtering distribution well due to the high dimensionality --- see, e.g., Figure 5 of \cite{bocquet2010beyond}. 


\begin{figure}
\begin{minipage}[c]{0.73\textwidth}
  \centering   
  \includegraphics[width=0.95\textwidth]{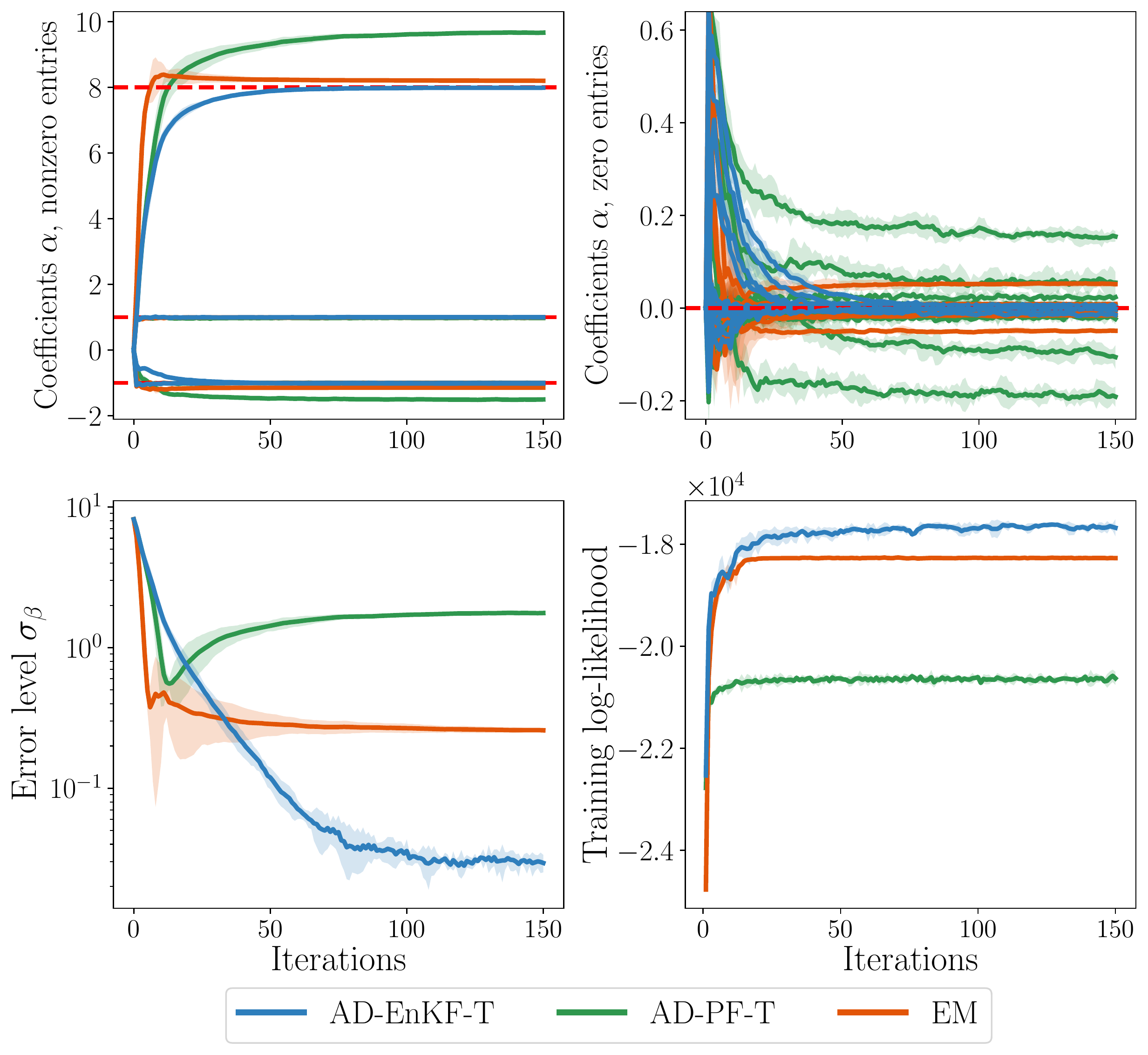}
\vspace*{-3mm}
\end{minipage}
\begin{minipage}[c]{0.25\textwidth}
\caption{\label{fig:l96-param-est} Learning parameterized dynamics of Lorenz-96 \eqref{eq:l96_param_alpha}, with $d_x=40$ and $H=I_{40}$. Learned value of the 18 coefficients of $\alpha$ (upper left for nonzero entries and upper right for zero entries, where the truth $\alpha^*$ is plotted in red dashed lines), averaged diagnosed error level $\sigma_\beta$ \cref{eq:diagnosed-error}
(lower left) and log-likelihood $\LhEnKF$/$\LhPF$ during training (lower right), as a function of training iterations. Throughout, shaded area corresponds to $\pm 2$ std over 5 repeated runs. (\cref{ssec:l96-param})
}
\end{minipage}
\end{figure}

We also consider varying the state dimension $d_x$ and observation model $H$.
(The parameterization 
in \cref{eq:l96_param_alpha}
is valid for any choice of $d_x$.) We measure the Euclidean distance between the value of learned $\alpha$ at the final training iteration (at convergence) to  $\alpha^*$. The training procedure is repeated 5 times and the results are shown in \cref{tb:l96_param_est}. We vary $d_x\in\{10, 20, 40, 80\}$ and consider two settings for $H$: fully observed at all coordinates, i.e., $H=I_{d_x}$, and partially observed at every two out of three coordinates \cite{sanz2015long}, i.e., $H=[e_1, e_2, e_4, e_5, e_7, \cdots]^{\top}$, where $\{e_i\}_{i=1}^{d_x}$ is the standard basis for $\R^{d_x}$. The number of particles used for all algorithms is fixed at $N=50$, and covariance tapering \cref{eq:tapering} with radius $r=5$ is applied to the EnKF. For both AD-EnKF-T and AD-PF-T, 
$L=20$. We find that AD-EnKF-T is able to consistently recover $\alpha^*$ regardless of the choice of $d_x$ and $H$, and is able to perform well in the important case where $N<d_x$, with an accuracy that is orders of magnitude better than the other two approaches. The EM approach is able to recover $\alpha^*$ consistently in fully observed settings, but with a lower accuracy. In partially-observed settings, EM does not converge to the same value in repeated runs, possibly due to the existence of multiple local maxima. AD-PF-T is able to converge consistently in fully observed settings but with the lowest accuracy, and runs into filter divergence issues in partially-observed settings, so that the training process is not able to complete. Moreover, we observe that the error of AD-PF-T tends to grow with the state dimension $d_x$, while the two approaches based on EnKF do not deteriorate when increasing the state dimension. This is further evidence that EnKF is superior in high-dimensional settings.

\begin{table}[htbp]
\resizebox{\columnwidth}{!}{%
\centering
{\footnotesize
\begin{tabular}{|c|c|c|c|c|c|c|c|}
\hline
     & \begin{tabular}[c]{@{}c@{}}$d_x=10$ \\ (full)\end{tabular} & \begin{tabular}[c]{@{}c@{}}$d_x=20$ \\ (full)\end{tabular} & \begin{tabular}[c]{@{}c@{}}$d_x=20$ \\ (partial)\end{tabular} & \begin{tabular}[c]{@{}c@{}}$d_x=40$\\  (full)\end{tabular} & \begin{tabular}[c]{@{}c@{}}$d_x=40$ \\ (partial)\end{tabular} & \begin{tabular}[c]{@{}c@{}}$d_x=80$ \\ (full)\end{tabular} & \begin{tabular}[c]{@{}c@{}}$d_x=80$ \\ (partial)\end{tabular} \\ \hline
EM       & {0.308}{$\pm$ 0.026} & {0.289}{$\pm$ 0.0114} & {2.28}{$\pm$ 4.92} & {0.268}{$\pm$ 0.0103} &
{7.754}{$\pm$ 8.057} & {0.231}{$\pm$ 0.0209} & 
{7.382}{$\pm$ 4.812}\\ \hline
AD-PF-T    & {0.262}{$\pm$ 0.020} & {0.711}{$\pm$ 0.0291} & 
$-$                       & {1.557}{$\pm$ 0.0422} & 
$-$                       & {2.079}{$\pm$ 0.0275} &
$-$ \\ \hline
AD-EnKF-T & \bftab {0.217}{$\pm$ 0.027} & \bftab {0.0325}{$\pm$ 0.0128} & \bftab {0.0835}{$\pm$ 0.0189} & \bftab {0.0283}{$\pm$ 0.0022} & \bftab {0.0930}{$\pm$ 0.0098} & \bftab {0.0540}{$\pm$ 0.0065} & \bftab {0.0813}{$\pm$ 0.0083}\\ \hline
\end{tabular}%
}}
\caption{Lorenz-96, learning parameterized dynamics with varying $d_x$ and observation models. The table shows recovery of learned $\alpha^*$ for each algorithm at the final training iteration, in terms of its distance to the truth $\alpha^*$ \cref{eq:alpha-star}. ``Full'' corresponds to full observations, i.e., $H=I_{d_x}$. ``Partial'' corresponds to observing 2 out of 3 coordinates, i.e., $H=[e_1, e_2, e_4, e_5, e_7, \cdots]^{\top}$. The ``-'' indicates that training cannot be completed due to filter divergence. (\cref{ssec:l96-param})
}
\label{tb:l96_param_est}
\end{table}

\subsubsection{Fully Unknown Dynamics}\label{ssec:l96-unknown}
We assume no knowledge of the reference vector field $f^*$, and we approximate it by a neural network surrogate $f_\alpha^\NN: \R^{d_x} \rightarrow \R^{d_x},$
where here $\alpha$ represents the NN weights. 
The structure of the NN is similar to the one in \cite{brajard2020combining} and is detailed in \cref{sec:implement-appen}. The number of parameters combined for $\alpha$ and $\beta$ is $d_\theta=9317$. The experimental results are compared to the model correction results, and hence are postponed to \cref{ssec:l96-correction}.

\subsubsection{Model Correction}\label{ssec:l96-correction}
We assume $f^*$ is unknown, but that an inaccurate model $f_\text{approx}$ is available. We make use of the parametric form \cref{eq:l96_param_alpha}, and define $f_\text{approx}$ via a perturbation $\widetilde{\alpha}$ of the true parameter $\alpha^*$:
\begin{equation}
    f_\text{approx} \triangleq f_{\widetilde{\alpha}}, \quad\quad \text{where }\widetilde \alpha_i \sim
    \begin{cases}
     \Nc(\alpha_i^*, 1), \quad & \text{if } i=0,\\
     \Nc(\alpha_i^*, 0.1), \quad &\text{if }  i\in\{1,\dots,5\},\\
     \Nc(\alpha_i^*, 0.01), \quad &\text{if }  i\in\{6,\dots,17\}.
    \end{cases}
\end{equation}
The coefficients of a higher order polynomial have a smaller amount of perturbation. $\widetilde\alpha$ is \emph{fixed} throughout the learning procedure. We approximate the residual $f^*-f_{\text{approx}}$ by a NN $g_\alpha^\NN$, 
where $\alpha$ represents the weights, and $g_\alpha^\NN$ has the same structure and the same number of parameters as in the fully unknown 
setting. The goal is to learn $\alpha$ so that $f_\alpha\triangleq f_\text{approx}+g_\alpha^\NN$ approximates $f^*$.

We set  $d_x=40$ and consider two settings for $H$: fully observed with $H=I_{40}$, $d_y=40$, and partially observed at every two out of three coordinates with $d_y=27$ (see \cref{ssec:l96-param}). Eight data sequences are generated with the reference model for training and four for testing, each with length $T=1200$. Other experimental settings are the same as in \cref{ssec:l96-param}.

For the setting where training data is fully observed, we compare AD-EnKF-T with AD-PF-T and the EM approach. The results are plotted in \cref{fig:l96_learn}. The number of particles used for all algorithms is fixed at $N=50$, and covariance tapering \cref{eq:tapering} with radius $r=5$ is applied to EnKF. The subsequence length for both AD-EnKF-T and AD-PF-T is chosen to be $L=20$. We find that, whether $f^*$ is fully known or an inaccurate model is available, AD-EnKF-T is able to learn the reference vector field $f^*$ well, with the smallest forecast RMSE among all methods. Applying a filtering algorithm to the learned model, we find that the states recovered by the AD-EnKF-T algorithm at the final iteration have the lowest error (filter RMSE) among all methods, indicating that AD-EnKF-T also has the ability to learn unknown states well. Moreover, the filter RMSE of AD-EnKF-T is close to the one computed using a filtering algorithm \emph{with}  known $f^*$ and $Q^*$. The test log-likelihood $\LhEnKF$ of the model learned by AD-EnKF-T is close to the one evaluated with the reference model. We also find that having an inaccurate model $f_\text{approx}$ is beneficial to the learning of AD-EnKF-T. 
The performance metrics are boosted compared to the ones with a fully unknown model. 
EM has worse results, where we find that the forecast RMSE does not consistently drop in the training procedure and the states are not accurately recovered. This might be because the smoothing distribution used by EM cannot be approximated accurately. AD-PF-T has the worst performance, possibly because PF fails in high dimensions.

\begin{figure}[htbp]
	\centering
	\includegraphics[width=\textwidth]{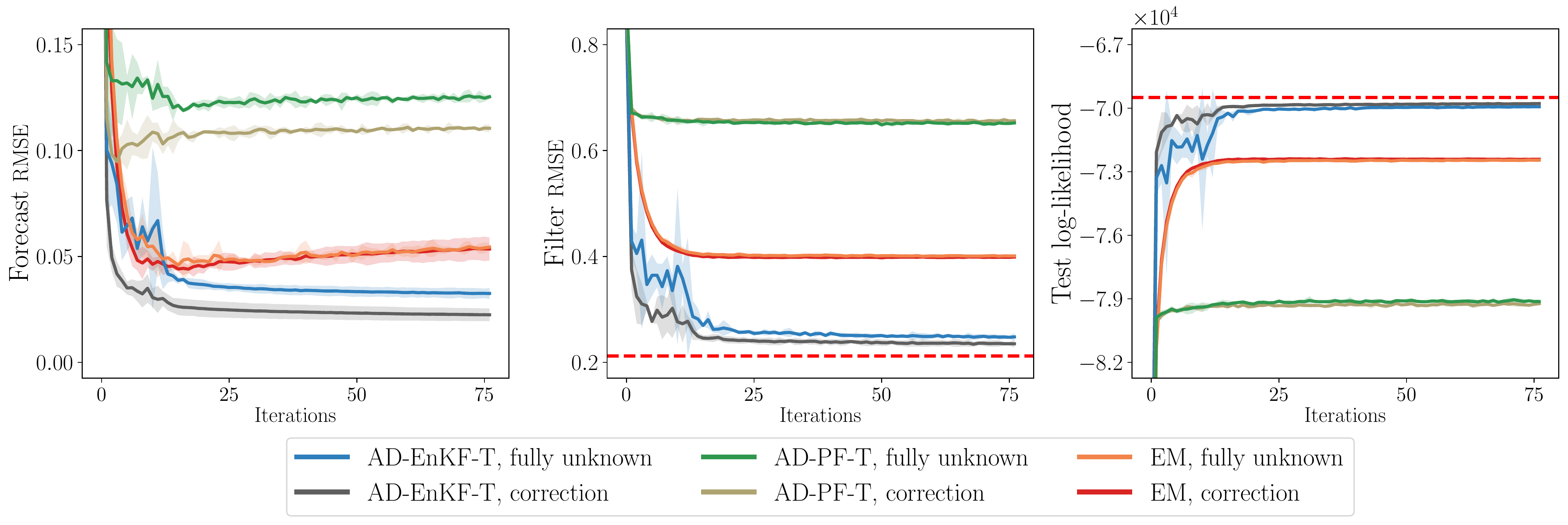}
    \vspace*{-7mm}
    \caption{Learning the Lorenz-96 model from fully unknown dynamics (\cref{ssec:l96-unknown}) v.s.\ model correction (\cref{ssec:l96-correction}), with full observations ($H=I_{d_x}$). All performance metrics are evaluated after each training iteration. Red dashed lines correspond to metric values obtained with the reference model $f^*$ and $Q^*$. }
	 \label{fig:l96_learn}
\end{figure}

We repeat the learning procedure in the setting where training data is partially observed at every two out of three coordinates. The results are shown in \cref{fig:l96_learn_partial}. Those for AD-PF-T are not shown since training cannot be completed due to filter divergence. We find that AD-EnKF-T is still able to recover $f^*$ consistently as well as the unknown states for all coordinates, including the ones that are not observed, and has a filter RMSE close to the one computed with knowledge of $f^*$. However, the performance metrics of the EM algorithm in the model correction experiment deteriorate as training proceeds, indicating that it may overfit the training data. In addition, we find that the EM algorithm does not converge to the same point in repeated trials, particularly so in the setting of fully unknown dynamics. All of these results indicate that AD-EnKF is advantageous when learning from partial observations in high dimensions.

The ability to recover the underlying dynamics and states even with incomplete observations and fully unknown dynamics is most likely due to the convolutional-type architecture of the NN $f_\alpha^\NN$, which implicitly assumes that each coordinate only interacts with its neighbors, and that this interaction is spatially invariant.

\begin{figure}[htbp]
	\centering
	\includegraphics[width=\textwidth]{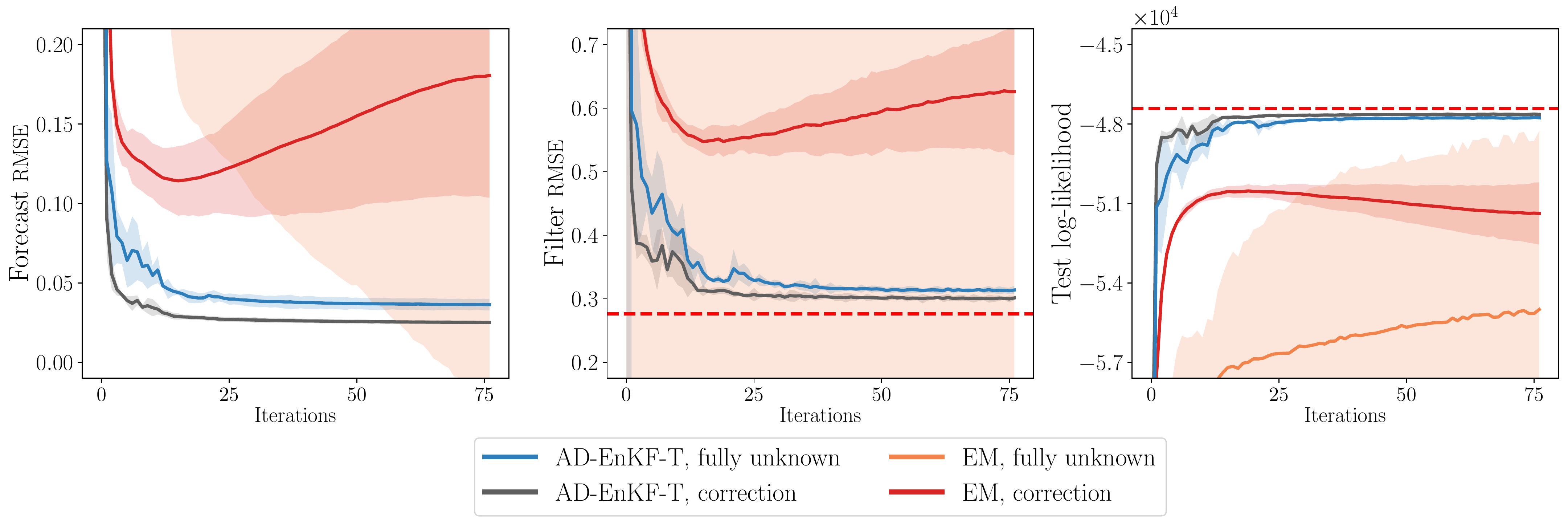}
    \vspace*{-7mm}
    \caption{Learning Lorenz-96 from fully unknown dynamics (\cref{ssec:l96-unknown}) v.s.\ model correction (\cref{ssec:l96-correction}) with partial observations ($H=[e_1, e_2, e_4, e_5, e_7, \cdots]^{\top}$). All performance metrics are evaluated after each training iteration. Red dashed lines correspond to metric values obtained with the reference model $f^*$ and $Q^*$. The absence of lines for EM in the fully unknown setting is due to its low and unstable performance.
    }
	 \label{fig:l96_learn_partial}
\end{figure}


\section{Conclusions and Future Directions}\label{sec:conclusions}
This paper introduced AD-EnKFs for the principled learning of states and dynamics in DA. We have shown that AD-EnKFs can be successfully integrated with DA localization techniques for recovery of high-dimensional states, and with TBPTT techniques to handle large observation data and high-dimensional surrogate models. Numerical results on the Lorenz-96 model show that AD-EnKFs outperform existing EM and PF methods to merge DA and ML. 

Several research directions stem from this work. First, gradient and Hessian information of $\LhEnKF$ obtained by autodiff can be utilized to design optimization schemes beyond the first-order approach we consider. Second, the convergence analysis of EnKF estimation of the log-likelihood and its gradient may be generalized to nonlinear settings. Third, the idea of AD-EnKF could be applied to auto-differentiate through other filtering algorithms, e.g. unscented Kalman filters, and 
in Bayesian inverse problems using iterative ensemble Kalman methods. Finally, the encouraging numerical results obtained on the Lorenz-96 model motivate the deployment and further investigation of AD-EnKFs in scientific and engineering applications where latent states need to be estimated with incomplete knowledge of their dynamics. 

\section*{Acknowledgments}
YC was partially supported by DMS-2027056 and NSF OAC-1934637.
DSA is grateful for the support of DMS-2027056. 
RW is grateful for the support of DOD FA9550-18-1-0166, DOE DE-AC02-06CH11357, NSF OAC-1934637, DMS-1930049, and DMS-2023109. 

\bibliographystyle{plain}
\bibliography{references}

\appendix

\paragraph{Notation} We denote by $c$ a constant that does not depend on $N$ and may change from line to line. We denote by $\|U\|_p$ the $L^p$ norm of a random vector/matrix $U$: $\|U\|_p \triangleq \big(\Expect|U|^p\big)^{1/p}$, where $|\cdot|$ is the underlying vector/matrix norm. (Here we use 2-norm for vectors and Frobenius norm for matrices.) For a sequence of random vectors/matrices $U_N$, we write
\[
U_N\conv U
\]
if, for any $p \ge 1$, there exists a constant $c$ such that
\begin{equation*}
\|U_N - U\|_p \le cN^{-1/2}, \quad\quad \forall N \ge 1.
\end{equation*}

For a scalar valued function $f(U)$ that takes a vector/matrix $U$ as input, we denote by $\partial_U f$ the derivative of $f$ w.r.t.\ $U$, which collects the derivative of $f$ w.r.t.\ each entry of the vector/matrix $U$. When $U$ is a vector, the notations $\partial_U f$ and $\nabla_U f$ are equivalent.

For a vector/matrix valued function $U(a)$ that takes a scalar $a$ as input, we denote by $\partial_a U$ the derivative of $U$ w.r.t.\ $a$, which collects the derivative of each entry of the vector/matrix $U$ w.r.t.\ $a$.


\section{Proof of Theorem \ref{prop:likelihood}}\label{sec:likeproof-appen}

We first recall the propagation of chaos statement. Notice that in the EnKF algorithm \cref{alg:EnKF}, we compute $\ton{x}$ sequentially, based on the forecast ensemble $\widehat x_t^{1:N}$ and its empirical mean and covariance $\tf{m}, \tf{C}$. We build ``substitute particles'' $\ton{\mf x}$ in a similar fashion, except that at each step the population mean and covariance $\tf{\mf m}, \tf{\mf C}$ are used instead of their empirical versions. Starting from $\mf x_0^{1:N} = x_0^{1:N}$, the update rules of substitute particles are listed below, with a side-by-side comparison to the EnKF update rules:
\begin{equation}\label{eq:sub_def}
\setstretch{1.4}
\begin{array}{rl|rl}
\multicolumn{2}{c|}{\text{EnKF particles}} & \multicolumn{2}{c}{\text{Substitute particles}}\\
\hline
\tnf{x} &= F_\alpha (\tmn{ x}) + \tn{\xi} &\tnf{\mf x} &= F_\alpha (\tmn{\mf x}) + \tn{\xi}\\
 \tf{m} &= \frac{1}{N} \sum_{n=1}^N \tnf{x} &\tf{\mf m} &= \Expect \bigl[\tnf{\mf x}\bigr]\\
\tf{C} &=  \frac{1}{N-1} \sum_{n=1}^N (\tnf{x} - \tf{m}) (\tnf{x} - \tf{m})^{\top}  & \tf{\mf C} &= \Expect \bigl[(\tnf{\mf x} - \tf{\mf m})(\tnf{\mf x} - \tf{\mf m})^{\top}\bigr]\\
\widehat K_t &= \tf{C} H^{\top} (H  \tf{C} H^{\top} + R)^{-1} & \widehat {\mf K}_t &= \tf{\mf C} H^{\top} (H \tf{\mf C} H^{\top} + R)^{-1}\\
\tn{ x} &= \tnf{ x} +  \widehat K_t (y_t + \tn{\gamma} - H \tnf{ x}) & \tn{\mf x} &= \tnf{\mf x} + \widehat {\mf K}_t (y_t + \tn{\gamma} - H \tnf{\mf x})\\
\end{array}   
\end{equation}
Notice that the substitute particles use the \emph{same} realization of random variables as the EnKF particles, including initialization of particles $x_0^{1:N}$, forecast simulation error $\tn{\xi},$ and noise perturbation $\tn{\gamma}$. As $N\rightarrow\infty$, one can show that the  EnKF particles $\ton{x}$ (resp. $\ton{\widehat x}$) are close to the substitute particles $\ton{\mf x}$ (resp. $\ton{\widehat {\mf x}}$), and hence the law of large numbers guarantees that $\tf{m}, \tf{C}$ are close to $\tf{\mf m}, \tf{\mf C}$. We summarize the main results from \cite{le2009large} (see also \cite{kwiatkowski2015convergence}): 
\begin{lemma}\label{lemma:main}
Under the same assumption of \cref{prop:likelihood}:\\
(1) For each $t\ge 1$, the substitute particles $\ton{\mf x}$ are i.i.d., and each of them has the same law as the true filtering distribution $p(x_t|y_{1:t})$. Similarly, $\tonf{\mf x}$ are i.i.d., and each of them has the same law as the true forecast distribution $p(x_t|y_{1:t-1})$. In particular,
\begin{equation}\label{eq:sub_lemma_1}
    p(x_t|y_{1:t-1}) = \Nc(x_t; \tf{\mf m}, \tf{\mf C}).
\end{equation}
(2) For each $t, n, p \ge 1,$ the EnKF particle $\tn{x}$ converges to the substitute particle $\tn{\mf x}$ in $L^p$ with convergence rate $N^{-1/2}$, and the substitute particle $\tn{\mf x}$ has finite moments of any order. The same holds for forecast particles $\tnf{x}$:
\begin{equation}\label{eq:sub_lemma_2}
    \tn{x} \conv \tn{\mf x}, \quad\quad \tnf{x} \conv \tnf{\mf x}, \quad\quad \|\tn{\mf x}\|_p \le c, \quad\quad \|\tnf{\mf x}\|_p \le c.
\end{equation}
In particular, $\widehat{m}_t$, $\widehat{C}_t$ converge to $\tf{\mf m}$, $\tf{\mf C}$ in $L^p$ with convergence rate $N^{-1/2}$:
\begin{equation}\label{eq:sub_lemma_3}
    \tf{m} \conv \tf{\mf m}, \quad\quad\quad\quad \tf{C} \conv \tf{\mf C}.
\end{equation}
\end{lemma}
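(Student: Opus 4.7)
The plan is to prove both parts simultaneously by induction on $t$, leveraging the linear--Gaussian structure in \cref{eq:linear_assump_thm} and the coupling in \cref{eq:sub_def} whereby the EnKF and substitute particles share the same realizations of $x_0^{1:N}$, $\tn{\xi}$, and $\tn{\gamma}$. For the base case $t=0$, the substitute particles coincide with the EnKF particles by construction, and both are i.i.d.\ draws from the Gaussian $p_0$, so all moment and coupling statements hold trivially.

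For part (1) at the inductive step, the forecast update $\tnf{\mf x} = A_\alpha \tmn{\mf x} + \tn{\xi}$ combined with i.i.d.\ noise $\tn{\xi}\iidsim\Nc(0,Q_\beta)$ and the inductive hypothesis that $\tmn{\mf x}$ are i.i.d.\ samples from the true (Gaussian) filtering distribution $p(x_{t-1}|y_{1:t-1})$ yields i.i.d.\ Gaussian $\tnf{\mf x}$ whose common law coincides with the Kalman filter forecast $p(x_t|y_{1:t-1})$, giving \cref{eq:sub_lemma_1}. Because the population moments $\tf{\mf m},\tf{\mf C}$ are then exactly the KF forecast moments, the population gain $\widehat{\mf K}_t$ equals the exact Kalman gain, and the analysis update acts as the linear conditioning map that produces i.i.d.\ samples from the Gaussian filtering distribution $p(x_t|y_{1:t})$. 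Finite $L^p$ moments of $\tnf{\mf x},\tn{\mf x}$ are then automatic from Gaussianity.

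For part (2), the forecast coupling is immediate: the shared realization of $\tn{\xi}$ gives $\tnf{x} - \tnf{\mf x} = A_\alpha(\tmn{x} - \tmn{\mf x})$, which transfers the $N^{-1/2}$ rate from $t-1$ to the forecast ensemble at $t$. To promote this to $\tf{m}\conv\tf{\mf m}$ and $\tf{C}\conv\tf{\mf C}$, I would decompose
\[
\tf{m} - \tf{\mf m} \;=\; \tfrac{1}{N}\sum_{n=1}^N\bigl(\tnf{x} - \tnf{\mf x}\bigr) \;+\; \Bigl(\tfrac{1}{N}\sum_{n=1}^N \tnf{\mf x} - \Expect\bigl[\tnf{\mf x}\bigr]\Bigr),
\]
bounding the first sum by the per-particle coupling and the second by a Marcinkiewicz--Zygmund inequality (which delivers rate $N^{-1/2}$ using the uniform $L^p$ bound on $\tnf{\mf x}$ from part (1)); the covariance case is analogous but quadratic, requiring Cauchy--Schwarz to split cross terms like $(\tnf{x}-\tnf{\mf x})(\tnf{\mf x}-\tf{\mf m})^{\top}$.

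The main obstacle is closing the analysis step. Writing
\[
\tn{x} - \tn{\mf x} \;=\; (I - \widehat K_t H)(\tnf{x} - \tnf{\mf x}) \;+\; (\widehat K_t - \widehat{\mf K}_t)\bigl(y_t + \tn{\gamma} - H \tnf{\mf x}\bigr),
\]
the first term is controlled provided the operator norm of $\widehat K_t$ is $L^p$-bounded, which follows from $R \succ 0$ and a uniform $L^p$ bound on $\tf{C}$. The delicate piece is the second term: I need $\|\widehat K_t - \widehat{\mf K}_t\|_p \le cN^{-1/2}$, which requires controlling the nonlinear map $C \mapsto CH^{\top}(HCH^{\top}+R)^{-1}$. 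Here I would use $R\succ 0$ to ensure the inverse is uniformly bounded in operator norm for all $C\succeq 0$, write the difference via a resolvent identity to expose the factor $\tf{C} - \tf{\mf C}$, and then apply H\"older's inequality to pair an $L^{2p}$ covariance bound with an $L^{2p}$ bound on $y_t + \tn{\gamma} - H\tnf{\mf x}$ (finite by Gaussianity of $\tn{\gamma}$ and the moment bounds on $\tnf{\mf x}$). This closes the induction and simultaneously yields \cref{eq:sub_lemma_2} and \cref{eq:sub_lemma_3}.
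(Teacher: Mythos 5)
Your proposal is correct, but note that the paper does not actually prove this lemma: it disposes of all three claims by citing Lemma~2.1, Proposition~4.4, and Theorem~5.2 of \cite{le2009large}. What you have written is a faithful reconstruction of the propagation-of-chaos argument behind those cited results, and it coincides almost step for step with the strategy the paper itself deploys when it \emph{does} carry out an induction of this type (the proof of \cref{prop:gradient} via Lemmas~\ref{lemma:HCHT} and~\ref{lemma:conv-grad}): the same forecast/empirical-moment/gain/analysis ordering within each inductive step, the same resolvent identity $A^{-1}-B^{-1}=A^{-1}(B-A)B^{-1}$ together with the bound $|(HCH^{\top}+R)^{-1}|\le|R^{-1}|$ to control the gain difference, and the same $L^p$ Cauchy--Schwarz device to handle products of dependent $L^{2p}$-bounded factors. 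Two small points deserve explicit verification in a full write-up. First, in part (1), the claim that the population-gain analysis update maps i.i.d.\ samples of the forecast distribution to i.i.d.\ samples of the filtering distribution requires checking the Joseph-form identity $(I-\widehat{\mf K}_t H)\tf{\mf C}(I-\widehat{\mf K}_t H)^{\top}+\widehat{\mf K}_t R\,\widehat{\mf K}_t^{\top}=(I-\widehat{\mf K}_t H)\tf{\mf C}$, which holds precisely because $\widehat{\mf K}_t$ is the exact Kalman gain. Second, in the term $(I-\widehat K_t H)(\tnf{x}-\tnf{\mf x})$ the two factors are dependent, so you need a uniform-in-$N$ $L^{2p}$ bound on $\widehat K_t$ (obtained from $\tf{C}\conv\tf{\mf C}$, which your ordering establishes earlier in the same inductive step) before applying H\"older; with that in hand the induction closes as you describe.
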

\begin{proof}
\cref{eq:sub_lemma_1} corresponds to Lemma 2.1 of \cite{le2009large}. \cref{eq:sub_lemma_2} corresponds to Proposition 4.4 of \cite{le2009large}. \cref{eq:sub_lemma_3} is a direct corollary of Theorem 5.2 of \cite{le2009large}.
\end{proof}

\begin{proof}[Proof of {\hypersetup{hidelinks}\cref{prop:likelihood}}]
By \cref{eq:sub_lemma_1}, using the Gaussian observation assumption \cref{eq:main_obs}:
\begin{equation}\label{eq:KF_like}
   \L(\theta) =\sum_{t=1}^T \log p(y_t|y_{1:t-1}) = \sum_{t=1}^T \log \Nc(y_t; H \tf{\mf m}, H \tf{\mf C} H^{\top} + R).
\end{equation}
By \cref{eq:EnKF_loglike},
\begin{equation}
    \LhEnKF(\theta) = \sum_{t=1}^T \log \Nc (y_t; H \tf{m}, H \tf{C} H^{\top} + R).
\end{equation}
Define
\begin{equation}\label{eq:ht_proof}
\begin{split}
    h_t(m, C) &\triangleq \log \Nc(y_t; Hm, HCH^{\top}+R)\\
        &= -\frac{1}{2} \log \det (HCH^{\top}+R) - \frac{1}{2} (y_t - H m)^{\top} (HCH^{\top}+R)^{-1} (y_t - H m) + \const.
\end{split}
\end{equation}
It suffices to show that, for each $t\ge 1$,
\begin{equation}\label{eq:ht-conv-proof}
    h_t(\tf{m}, \tf{C}) \conv h_t(\tf{\mf m}, \tf{\mf C}) .
\end{equation}
We denote by $\Ss_+^{d_x} \subset \R^{d_x \times d_x}$ the space of all positive semi-definite matrices equipped with Frobenius norm. Notice that $h_t$ is a continuous function on $\R^{d_x} \times \Ss_+^{d_x}$, since $HCH^\top + R \succeq R \succ 0$. To show convergence in $L^p$, intuitively one would expect a Lipschitz-type continuity to hold for $h_t$, in a suitable sense. We inspect the derivatives of $h_t$ w.r.t.\ $m$ and $C$, which will also be useful for later developments:
\begin{equation}\label{eq:grad-h}
\begin{split}
    \partial_m h_t (m, C) &= - H^{\top} (HCH^{\top} + R)^{-1} (y_t - Hm), \\
    \partial_C h_t (m, C) &= -\frac{1}{2} H^{\top} (HCH^{\top}+R)^{-1} H \\
    &\quad\quad\quad + \frac{1}{2} H^{\top} (HCH^{\top}+R)^{-1} (y_t-Hm)(y_t-Hm)^{\top} (HCH^{\top}+R)^{-1} H.
\end{split}
\end{equation}
Since $R^{d_x}\times \Ss_+^{d_x}$ is convex, by the mean value theorem, triangle inequality and Cauchy-Schwarz, and define $m(\chi)\triangleq  \chi\tf{\mf m} + (1-\chi) \tf{m}$, $C(\chi)\triangleq  \chi\tf{\mf C} + (1-\chi) \tf{C}$,
\begin{equation}
\begin{split}
    \big| h_t(\tf{m}, \tf{C}) - h_t(\tf{\mf m}, \tf{\mf C}) \big| &\le  \sup_{\chi \in [0,1]} \big| \partial_m h_t \big( m(\chi), C(\chi) \big) \big| |\tf{m} - \tf{\mf m}| \\
    & \quad\quad\quad + \sup_{\chi \in [0,1]} \big| \partial_C h_t \big( m(\chi), C(\chi) \big) \big| |\tf{C} - \tf{\mf C}|.
\end{split}
\end{equation}
Taking $L_p$ norm on both sides,
\begin{equation}\label{eq:prop_h_p}
\begin{split}
    \big\| h_t(\tf{m}, \tf{C}) - h_t(\tf{\mf m}, \tf{\mf C}) \big\|_p &\le  \sup_{\chi \in [0,1]} \big\| \partial_m h_t \big( m(\chi), C(\chi) \big) \big\|_{2p} \|\tf{m} - \tf{\mf m}\|_{2p} \\
    & \quad\quad\quad + \sup_{\chi \in [0,1]} \big\| \partial_C h_t \big( m(\chi), C(\chi) \big) \big\|_{2p} \|\tf{C} - \tf{\mf C}\|_{2p},
\end{split}
\end{equation}
where we have used the triangle inequality and the $L^p$ Cauchy-Schwarz inequality $\| |U||V| \|_p \le \|U\|_{2p} \|V\|_{2p}$, see e.g., Lemma 2.1 of \cite{kwiatkowski2015convergence}. Also, by plugging in $\cref{eq:grad-h}$, for each $\chi \in [0,1]$,
\begin{equation}\label{eq:prop_hm_p}
\begin{split}
    \big\| \partial_m h_t \big( m(\chi), C(\chi) \big) \big\|_{2p} &\le \big\| |H| |(H C(\chi) H^{\top}+R)^{-1}|(|y_t-\chi H \tf{\mf m} - (1-\chi)H \tf{m}|) \big\|_{2p}\\
    &\le |H||R^{-1}|\big(|y_t| + |H| |\tf{\mf m}| + |H|\|\tf{m}\|_{2p} \big) \le c,
\end{split}
\end{equation}
where we have used that $|(HC(\chi)H^{\top}+R)^{-1}| \le |R^{-1}|$, that $\tf{\mf m}$ is deterministic, and that all moments of $\tf{m}$ are finite, by \cref{eq:sub_lemma_3}. Similarly,
\begin{equation}\label{eq:prop_hc_p}
\begin{split}
    \big\| \partial_C h_t \big( m(\chi), C(\chi) \big) \big\|_{2p} & \le \frac{1}{2} |H|^2|R^{-1}| + \frac{1}{2} |H|^2|R^{-1}|^2 \|y_t - \chi H \tf{\mf m} - (1-\chi)H \tf{m} \|_{4p}^2\\
    &\le \frac{1}{2} |H|^2|R^{-1}| + \frac{1}{2} |H|^2|R^{-1}|^2(|y_t|+|H||\tf{\mf m}| + |H|\|\tf{m}\|_{4p})^2 \le c,
\end{split}
\end{equation}
where we have  used that $|vv^{\top}| = |v|^2$ for vector $v$. Thus, combining \cref{eq:prop_h_p,eq:prop_hm_p,eq:prop_hc_p,eq:sub_lemma_3} gives
\begin{equation}
\| h_t(\tf{m}, \tf{C}) - h_t(\tf{\mf m}, \tf{\mf C}) \|_p \le c N^{-1/2},
\end{equation}
which concludes the proof.
\end{proof}

\section{Proof of Theorem \ref{prop:gradient}}\label{sec:gradproof-appen}
Without loss of generality, we assume that $\theta\in \R$ is a scalar parameter, since in general the gradient w.r.t.\ $\theta$ is a collection of derivatives w.r.t.\ each element of $\theta$. We will use the following lemma repeatedly:
\begin{lemma}\label{lemma:conv}
For sequences of random vectors/matrices $U_N$, $V_N$:\\
(1) If $U_N - V_N \conv 0$ and $V_N \conv V,$ then 
\begin{equation}
U_N \conv V.
\end{equation}
(2) If $U_N \conv U$, $V_N \conv V$ and $U, V$ have finite moments of any order, then
\begin{equation}
    U_N V_N \conv U V.
\end{equation}
More generally, the result holds for multiplication of more than two variables.
\end{lemma}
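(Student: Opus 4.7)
My plan is to prove both parts by reducing them to elementary manipulations of $L^p$ norms (triangle inequality and the $L^p$ Cauchy--Schwarz inequality $\|UV\|_p \le \|U\|_{2p}\|V\|_{2p}$), together with the quantitative bound $cN^{-1/2}$ that is built into the definition of $\conv$.

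For part (1), I would fix $p\ge 1$ and simply invoke the triangle inequality in $L^p$ to write
\begin{equation*}
\|U_N - V\|_p \le \|U_N - V_N\|_p + \|V_N - V\|_p.
\end{equation*}
Both terms on the right are bounded by $cN^{-1/2}$ by the hypotheses $U_N - V_N \conv 0$ and $V_N \conv V$, so adding the two constants gives the desired rate. This is essentially a one-line argument and I do not expect any obstacle there.

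For part (2), the plan is to decompose the product difference in the usual telescoping way,
\begin{equation*}
U_N V_N - UV = (U_N - U)\,V_N + U\,(V_N - V),
\end{equation*}
and then apply the triangle inequality in $L^p$ followed by $L^p$ Cauchy--Schwarz to each term:
\begin{equation*}
\|U_N V_N - UV\|_p \le \|U_N - U\|_{2p}\,\|V_N\|_{2p} + \|U\|_{2p}\,\|V_N - V\|_{2p}.
\end{equation*}
The hypotheses $U_N \conv U$ and $V_N \conv V$ give $\|U_N - U\|_{2p}, \|V_N - V\|_{2p} \le cN^{-1/2}$. The assumption that $U$ and $V$ have finite moments of every order controls $\|U\|_{2p}$ directly, and also controls $\|V_N\|_{2p}$ via the triangle inequality $\|V_N\|_{2p} \le \|V_N - V\|_{2p} + \|V\|_{2p} \le cN^{-1/2} + \|V\|_{2p}$, which is uniformly bounded in $N$. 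Collecting the estimates gives $\|U_N V_N - UV\|_p \le c' N^{-1/2}$ as required. The only subtlety is making sure all the $L^p$ norms used are finite, which is exactly why the moment hypotheses on $U$ and $V$ appear; without them, $\|U\|_{2p}$ or the limit of $\|V_N\|_{2p}$ could blow up.

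To extend to products of more than two variables, I would argue by induction on the number of factors: having proved the two-factor case, define $W_N = U_N^{(1)} \cdots U_N^{(k)}$, observe that $W_N \conv W := U^{(1)}\cdots U^{(k)}$ by the induction hypothesis (and that $W$ inherits finite moments of all orders from the factors, by Hölder's inequality), and then apply the two-factor result to $W_N \cdot U_N^{(k+1)}$. No step here should be more than a routine $L^p$ inequality; the main conceptual point to flag in the write-up is the role of the moment assumptions in keeping the $\|V_N\|_{2p}$-type factors bounded uniformly in $N$.
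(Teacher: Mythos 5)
Your proposal is correct and matches the paper's own proof essentially line for line: part (1) is the same triangle-inequality splitting $U_N - V = (U_N - V_N) + (V_N - V)$, and part (2) uses the identical decomposition $U_N V_N - UV = (U_N - U)V_N + U(V_N - V)$ followed by the $L^p$ Cauchy--Schwarz bound $\|U_N V_N - UV\|_p \le \|U_N - U\|_{2p}\|V_N\|_{2p} + \|U\|_{2p}\|V_N - V\|_{2p}$ and the uniform control of $\|V_N\|_{2p}$ via $\|V_N - V\|_{2p} + \|V\|_{2p}$. Your inductive remark for products of more than two factors is also the natural justification of the paper's closing sentence, which the paper leaves unproved.
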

\begin{proof}
(1) Using $U_N-V=(U_N-V_N)+(V_N-V)$, the proof follows from the triangle inequality.\\
(2) Applying triangle inequality and $L^p$ Cauchy-Schwarz inequality,
\begin{equation}
\begin{split}
\|U_N V_N - UV\|_p &\le \|(U_N-U)V_N\|_p+ \|U(V_N-V)\|_p\\
&\le \|U_N-U\|_{2p} \|V_N\|_{2p} + \|U\|_{2p}\|V_N-V\|_{2p}\\
&\le c N^{-1/2} \Bigl(\|V\|_{2p}+c N^{-1/2} \Bigr) + \|U\|_{2p}c N^{-1/2}\\ 
&\le c N^{-1/2}.
\end{split}
\end{equation}
\end{proof}

The following result, which we will use repeatedly, is an immediate corollary of \cref{lemma:main}:
\begin{lemma}\label{lemma:HCHT}
Under the same assumption of \cref{prop:likelihood},
\begin{equation}\label{eq:HCHT-proof}
    (H\tf{C} H^\top+R)^{-1} \conv (H \tf{\mf C} H^\top+R)^{-1}.
\end{equation}
\end{lemma}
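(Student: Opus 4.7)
The plan is to reduce the convergence of inverses to the already-established convergence $\tf{C} \conv \tf{\mf C}$ from \cref{lemma:main}, via the standard matrix identity
\begin{equation*}
A^{-1} - B^{-1} = A^{-1}(B-A)B^{-1},
\end{equation*}
applied with $A = H\tf{C}H^\top + R$ and $B = H\tf{\mf C}H^\top + R$. The critical difference $B - A = H(\tf{\mf C} - \tf{C})H^\top$ is then controlled directly by $\tf{C} \conv \tf{\mf C}$.

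First, I would note that both $A, B \succeq R \succ 0$ almost surely, so $A^{-1}$ and $B^{-1}$ admit a deterministic uniform bound in operator norm, hence (as $d_y$ is fixed) in Frobenius norm: there exists a constant $c$ depending only on $R$ and $d_y$ with $|A^{-1}| \le c$ almost surely and $|B^{-1}| \le c$ (where $B^{-1}$ is in fact deterministic). This gives the needed $L^p$ control on the two outer factors for free, with no moment computations required.

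Next, from the identity above,
\begin{equation*}
\bigl\| (H\tf{C}H^\top+R)^{-1} - (H\tf{\mf C}H^\top+R)^{-1} \bigr\|_p
\le \bigl\| |A^{-1}|\, |H|^2\, |\tf{\mf C}-\tf{C}|\, |B^{-1}| \bigr\|_p
\le c\,|H|^2\, \|\tf{C} - \tf{\mf C}\|_p,
\end{equation*}
using the deterministic bounds on $|A^{-1}|$ and $|B^{-1}|$ together with sub-multiplicativity of the Frobenius norm (and the elementary $L^p$ manipulations of Lemma~2.1 of \cite{kwiatkowski2015convergence} if one prefers a Cauchy--Schwarz splitting). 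By \cref{eq:sub_lemma_3} in \cref{lemma:main}, $\|\tf{C} - \tf{\mf C}\|_p \le c N^{-1/2}$, which yields the claim.

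There is no real obstacle here; the only thing to be careful about is that while $\tf{C}$ and $A^{-1}$ are random, the positive-definiteness lower bound $A \succeq R$ gives a \emph{pointwise} (not merely in expectation) bound on $|A^{-1}|$, so no higher moments of $\tf{C}$ enter the argument. This is why the rate $N^{-1/2}$ transfers immediately from $\tf{C} \conv \tf{\mf C}$ without invoking the more delicate uniform moment bounds on the particle ensemble.
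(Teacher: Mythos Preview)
Your proposal is correct and essentially identical to the paper's own proof: both apply the identity $A^{-1}-B^{-1}=A^{-1}(B-A)B^{-1}$, use the pointwise bound $|(HCH^\top+R)^{-1}|\le |R^{-1}|$ coming from $HCH^\top+R\succeq R\succ 0$, and then invoke $\tf{C}\conv\tf{\mf C}$ from \cref{lemma:main}. The only cosmetic difference is that the paper writes the bound directly as $|R^{-1}|^2|H|^2\|\tf{C}-\tf{\mf C}\|_p$ rather than absorbing into a generic constant $c$.
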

\begin{proof} Using the identity $A^{-1}-B^{-1} = A^{-1}(B-A)B^{-1}$ for invertible matrices $A$, $B$:
\begin{equation}
\begin{split}
    &\phantom{{}={}}\|(H \tf{C} H^\top +R)^{-1} - (H \tf{\mf C} H^\top +R)^{-1}\|_p \\
    &= \|(H \tf{C} H^\top +R)^{-1} H (\tf{\mf C} - \tf{C})H^\top (H \tf{\mf C} H^\top +R)^{-1}\|_p \\
    &\le  |R^{-1}|^2|H|^2  \| \tf{C} - \tf{\mf C} \|_{p}\\
    &\le c N^{-1/2},
\end{split}
\end{equation}
where we have used the $L^p$ convergence of $\tf{C}$ to $\tf{\mf C}$ \cref{eq:sub_lemma_3}, and the fact that $| (H C H^T+R)^{-1}| \le |R^{-1}|$ for $C \succeq 0$.
\end{proof}

\begin{lemma}\label{lemma:conv-grad}
Under the same assumption of \cref{prop:gradient}, for each $t\ge 1$, both $\pt \tnf{x}$ and $\pt{\tnf{\mf x}}$ exist, and $\pt \tnf{x}$ converges to $\pt \tnf{\mf x}$ in $L^p$ for any $p\ge 1$ with convergence rate $N^{-1/2}$. Moreover, $\pt \tnf{\mf x}$ has finite moments of any order:
\begin{gather}
    \pt \tnf{x} \conv \pt \tnf{\mf x},  \quad\quad
    \| \pt \tnf{\mf x} \|_{p} \le c, \quad\quad \forall n. \label{eq:grad_lemma_1}
\end{gather}
In addition, all derivatives $\pt \tf{m}$, $\pt \tf{\mf m}$, $\pt \tf{C}$, $\pt \tf{\mf C}$, $\pt \tf{K}$ and $\pt \tf{\mf K}$ exist, and 
\begin{gather}
     \pt \tf{m} \conv \pt \tf{\mf m},\quad\quad
     \pt \tf{C} \conv \pt \tf{\mf C},\quad\quad
     \pt \widehat K_t \conv \pt \widehat {\mf K}_t. \label{eq:grad_lemma_2}
\end{gather}
\end{lemma}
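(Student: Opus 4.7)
The plan is to prove this by induction on $t$, mirroring the propagation-of-chaos argument in \cref{lemma:main} but now applied to derivatives with respect to $\theta$. At each time step I would establish, in the order in which EnKF computes them, both existence and $L^p$ convergence at rate $N^{-1/2}$ (together with uniform $L^p$ bounds on the substitute-particle limits) for: the forecast-particle derivative $\pt \tnf{x}$; the empirical forecast moments $\pt \tf{m}$ and $\pt \tf{C}$; the Kalman-gain derivative $\pt \widehat K_t$; and finally the analysis-particle derivative $\pt \tn{x}$, which feeds into the next iteration.

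For the base case, the forecast at $t=1$ is $\widehat x_1^n = A_\alpha x_0^n + S_\beta \widehat \xi_1^n$, where $x_0^n$ is drawn from a $\theta$-independent Gaussian and $\widehat \xi_1^n$ is decoupled from $\theta$ via the reparameterization in \cref{eq:reparam}. Hence $\pt \widehat x_1^n = (\pt A_\alpha)\, x_0^n + (\pt S_\beta)\, \widehat \xi_1^n$ coincides with $\pt \widehat{\mf x}_1^n$ and is a linear combination of Gaussians, so it has finite moments of every order. For the inductive step, differentiating $\tnf{x} = A_\alpha x_{t-1}^n + S_\beta \xi_t^n$ gives
\begin{equation*}
\pt \tnf{x} \;=\; (\pt A_\alpha)\, x_{t-1}^n \;+\; A_\alpha\, \pt x_{t-1}^n \;+\; (\pt S_\beta)\, \xi_t^n,
\end{equation*}
with an identical formula for $\pt \tnf{\mf x}$ after substituting $\mf x_{t-1}^n$ and $\pt \mf x_{t-1}^n$. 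The induction hypothesis gives $\pt x_{t-1}^n \conv \pt \mf x_{t-1}^n$, while \cref{lemma:main} supplies $x_{t-1}^n \conv \mf x_{t-1}^n$; combining these via the product-rule statement in \cref{lemma:conv} yields $\pt \tnf{x} \conv \pt \tnf{\mf x}$, and the same recursion controls the $L^p$ norm of $\pt \tnf{\mf x}$ from finite moments at $t-1$.

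For the empirical mean I would use that $\pt \tf{\mf m} = \pt \Expect[\tnf{\mf x}] = \Expect[\pt \tnf{\mf x}]$, where the exchange of derivative and expectation is justified by dominated convergence using the uniform $L^p$ bound just established. Writing
\begin{equation*}
\pt \tf{m} - \pt \tf{\mf m} \;=\; \frac{1}{N}\sum_{n=1}^N \bigl(\pt \tnf{x} - \pt \tnf{\mf x}\bigr) \;+\; \biggl(\frac{1}{N}\sum_{n=1}^N \pt \tnf{\mf x} - \Expect[\pt \tnf{\mf x}]\biggr),
\end{equation*}
the first sum is controlled term-by-term by the previous step, while the second is an i.i.d.\ empirical-mean fluctuation of order $N^{-1/2}$ in $L^p$ by Marcinkiewicz--Zygmund (the $\pt \tnf{\mf x}$ are i.i.d.\ since the $\mf x_{t-1}^n$ are, per \cref{lemma:main}). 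The covariance derivative $\pt \tf{C}$ is handled analogously by expanding the sample covariance via the product rule and absorbing cross terms with repeated applications of \cref{lemma:conv}. The Kalman-gain derivative follows from the matrix identity $\pt M^{-1} = -M^{-1}(\pt M) M^{-1}$ with $M = H\tf{C}H^\top + R$; convergence then comes from combining $\pt \tf{C} \conv \pt \tf{\mf C}$, \cref{lemma:HCHT}, and \cref{lemma:conv}.

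The main obstacle is not any single computation but the bookkeeping of propagating uniform $L^p$ moment bounds and convergence rates through the chain forecast $\to$ moments $\to$ gain $\to$ analysis at every time step. The analysis update $\tn{x} = \tnf{x} + \widehat K_t(y_t + \tn{\gamma} - H\tnf{x})$ is the most delicate, since $\pt \widehat K_t$ is an $N$-dependent random matrix coupling \emph{all} particles; thus $\pt \tn{x}$ ceases to be a simple per-particle object, and its $L^p$ norm must be controlled by $L^{2p}$ Cauchy--Schwarz splits of products between $\pt \widehat K_t$ and residual terms, exactly as in the proof of \cref{prop:likelihood}. A subtler point worth flagging is that the exchange $\pt \Expect[\tnf{\mf x}] = \Expect[\pt \tnf{\mf x}]$ requires the moment bound on $\pt \tnf{\mf x}$ in \cref{eq:grad_lemma_1}, so the two conclusions \cref{eq:grad_lemma_1} and \cref{eq:grad_lemma_2} must be established jointly within a single induction rather than sequentially.
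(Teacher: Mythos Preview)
Your proposal is correct and follows essentially the same approach as the paper: a joint induction on $t$ that cycles through forecast $\to$ empirical moments $\to$ gain $\to$ analysis, using the product rule in \cref{lemma:conv}, the matrix-inverse derivative identity together with \cref{lemma:HCHT}, and the $L^p$ law of large numbers for the i.i.d.\ substitute-particle derivatives. Your explicit identification of the two-term decomposition for $\pt\tf{m}-\pt\tf{\mf m}$ and the need for a single joint induction matches the paper's argument exactly; the only cosmetic difference is that the paper takes the forecast derivative at step $t$ as the induction hypothesis and closes the loop at the forecast of step $t+1$, whereas you phrase it via the analysis derivative at $t-1$.
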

\begin{proof}
We will prove this by induction. For $t=1$, since $\onf{x} = A x_0^n+S \xi_0^n = \onf{\mf x}$,
\begin{equation}
    \pt \onf{x} = (\pt A) x_0^n + (\pt S) \xi_0^n = \pt \onf{\mf x},
\end{equation}
and both derivatives $\pt \onf{x}$ and $\pt \onf{\mf x}$ exist.  Also,
\begin{equation}
    \| \pt \widehat{\mf x}_1^n \|_{p} \le |\pt A| \|x_0^n\|_p + |\pt S| \|\xi_0^n\|_p \le c,
\end{equation}
since $x_0^n$ and $\xi_0^n$ are drawn from Gaussian distributions, which have finite moments of any order. So \cref{eq:grad_lemma_1} holds for $t=1$.

Assume \cref{eq:grad_lemma_1} holds for step $t$. Then, using the definition for $\tf{m}$:
\begin{equation}
    \pt \tf{m} = \frac{1}{N}   \sum_{n=1}^N \pt\tnf{x} \convincircle{1} \frac{1}{N}   \sum_{n=1}^N \pt\tnf{\mf x} \convincircle{2} \Expect[\pt \tnf{\mf x}] \underset{\incircle{3}}{=} \pt \Expect[\tnf{x}] = \pt \tf{\mf m}.
\end{equation}
Convergence \incircle{1} follows from induction assumption \cref{eq:grad_lemma_1}. Convergence \incircle{2} follows from law of large numbers in $L^p$, since $\pt \tnf{\mf x}$ are i.i.d. and the moments of $\pt \tnf{\mf x}$ are finite by induction assumption \cref{eq:grad_lemma_1}. The swap of differentiation and expectation in \incircle{3} is valid since the expectation is taken over a distribution that is independent of $\theta$. Both derivatives $\pt \tf{m}$ and $\pt \tf{\mf m}$ exist. Similarly,
\begingroup
\allowdisplaybreaks
\begin{equation}
\begin{split}
    \pt \tf{C} &\Hquad=\Hquad \frac{1}{N-1} \sum_{n=1}^N \pt(\tnf{x} (\tnf{x})^\top) - \frac{N}{N-1}\pt(\tf{m}\tf{m}^\top)\\
    &\Hquad=\Hquad \frac{1}{N-1} \sum_{n=1}^N \big( (\pt \tnf{ x}) (\tnf{ x})^{\top} + \tnf{ x}(\pt \tnf{ x})^{\top} \big) - \frac{N}{N-1} \big((\pt \tf{m}) \tf{m}^{\top} + \tf{m} (\pt \tf{m})^{\top} \big) \\
    &\convincircle{1} \frac{1}{N-1} \sum_{n=1}^N \big( (\pt \tnf{\mf x}) (\tnf{\mf x})^{\top} + \tnf{  \mf x}(\pt \tnf{\mf x})^{\top} \big) - \frac{N}{N-1} \big((\pt \tf{\mf m}) \tf{\mf m}^{\top} + \tf{\mf m} (\pt \tf{\mf m})^{\top} \big) \\
    &\convincircle{2} \E[\pt \big(\tnf{\mf x}(\tnf{\mf x} \big)^{\top})] - \pt (\tf{\mf m} \tf{\mf m}^{\top})\\
    &\Hquad\underset{\incircle{3}}{=}\Hquad \pt (\Expect[\tnf{\mf x} (\tnf{\mf x})^{\top}] - \tf{\mf m} \tf{\mf m}^{\top}) \\
    &\Hquad=\Hquad \pt \tf{\mf C}.
\end{split}
\end{equation}
\endgroup
For \incircle{1} we have used the $L^p$ convergence of $\tnf{x}$ to $\tnf{\mf x}$, $\pt \tnf{x}$ to $\pt \tnf{\mf x}$, $\tf{m}$ to $\tf{\mf m}$ and $\pt \tf{m}$ to $\pt \tf{\mf m}$ with rate $N^{-1/2}$, and the fact that $\tnf{\mf x}$ and $\pt \tnf{\mf x}$ have finite moments of any order, followed by \cref{lemma:conv}. Convergence \incircle{2} follows from law of large numbers in $L^p$ since $(\pt \tnf{\mf x})(\tnf{\mf x})^\top$ are i.i.d. with finite moments, by the Cauchy-Schwarz inequality. \incircle{3} is valid since the expectation is taken over a distribution that is independent of $\theta$. Both derivatives $\pt \tf{C}$ and $\pt \tf{\mf C}$ exist. Similarly,
\begin{equation}
\begin{split}
    \pt \widehat K_t &\Hquad=\Hquad \pt \big( \tf{C} H (H \tf{C} H^{\top} + R)^{-1} \big) \\
    &\Hquad\underset{\incircle{1}}{=}\Hquad  (\pt \tf{C}) H (H \tf{C} H^{\top}+R)^{-1} \\
    &\quad\quad\quad - \tf{C} H (H \tf{C} H^{\top} + R)^{-1} (H (\pt \tf{C}) H^{\top} + R) (H \tf{C} H^{\top} + R)^{-1} \\
    &\convincircle{2} (\pt \tf{\mf C}) H (H \tf{\mf C} H^{\top}+R)^{-1} \\
    &\quad\quad\quad - \tf{\mf C} H (H \tf{\mf C} H^{\top} + R)^{-1} (H (\pt \tf{\mf C}) H^{\top} + R) (H \tf{\mf C} H^{\top} + R)^{-1} \\
    &\Hquad\underset{\incircle{1}}{=}\Hquad \pt \big( \tf{\mf C} H (H \tf{\mf C} H^{\top} + R)^{-1} \big) \\
    &\Hquad=\Hquad \pt \widehat {\mf K}_t.
\end{split}
\end{equation}
Here equalities \incircle{1} and \incircle{3} follow from chain rule. For \incircle{2} we have used the $L^p$ convergence of $\tf{C}$ to $\tf{\mf C}$, $\pt \tf{C}$ to $\pt \tf{\mf C}$ and $(H\tf{C} H^\top+R)^{-1}$ to $(H\tf{\mf C} H^\top+R)^{-1}$ with rate $N^{-1/2}$ (by \cref{eq:HCHT-proof}), followed by \cref{lemma:conv}. Both derivatives $\pt \widehat K_t$ and $\pt \widehat{\mf K}_t$ exist since $R\succ 0$.

To show \cref{eq:grad_lemma_1} holds for step $t+1$, we need to investigate the derivatives of the analysis ensemble $\pt \tn{x}$, by plugging in the EnKF update formula:

\begin{equation}
\begin{split}
    \pt \tn{x} &\Hquad=\Hquad \pt (\tnf{x} + \widehat K_t (y_t + \tn{\gamma} - H \tnf{x}) \\
    &\Hquad\underset{\incircle{1}}{=}\Hquad (I - \widehat K_t H) \pt \tnf{x} + (\pt \widehat K_t) (y_t + \tn{\gamma} - H \tnf{x}) \\
    &\Hquad=\Hquad \big(I - \tf{C} H^\top (H \tf{C} H^\top + R)^{-1} H\big) \pt \tnf{x} + (\pt \widehat K_t) (y_t + \tn{\gamma} - H \tnf{x}) \\
    &\convincircle{2} (I- \tf{\mf C} H^\top (H \tf{\mf C} H^\top + R)^{-1} H) \pt \tnf{\mf x} + (\pt \widehat {\mf K}_t) (y_t + \tn{\gamma} - H \tnf{\mf x})\\
    &\Hquad=\Hquad (I - \widehat{\mf K}_t H) \pt \tnf{\mf x} + (\pt \widehat{\mf K}_t) (y_t + \tn{\gamma} - H \tnf{\mf x}) \\
    &\Hquad\underset{\incircle{3}}{=}\Hquad \pt (\tnf{\mf x} + \widehat {\mf K}_t (y_t + \tn{\gamma} - H \tnf{\mf x}) \\
    &\Hquad=\Hquad \pt \tn{\mf x}.
\end{split}
\end{equation}
Equalities \incircle{1} and \incircle{3} follow from chain rule. For \incircle{2} we have used the $L^p$ convergence of $\tnf{x}$ to $\tnf{\mf x}$, $\pt \tnf{x}$ to $\pt \tnf{\mf x}$, $\tf{C}$ to $\tf{\mf C}$, $\pt \widehat K_t$ to $\pt \widehat{\mf K}_t$, and $(H\tf{C} H^\top+R)^{-1}$ to $(H\tf{\mf C} H^\top+R)^{-1}$, with convergence rate $N^{-1/2}$, and the fact that $\tnf{\mf x}$, $\pt \tnf{\mf x}$ and the Gaussian random variable $\tn{\gamma}$ have finite moments of any order, followed by \cref{lemma:conv}. Both derivatives $\pt \tn{x}$ and $\pt \tn{\mf x}$ exist since $R \succ 0$. We also have the moment bound on $\pt \tn{\mf x}$:
\begin{equation}
    \| \pt \tn{\mf x} \| \le |I - \widehat {\mf K}_t H| \| \pt \tnf{\mf x} \|_p + |\pt \widehat {\mf K}_t| (|y_t| + \|\tn{\gamma}\|_p + |H| \|\tnf{\mf x}\|_p) \le c,
\end{equation}
since $\tnf{\mf x}$, $\pt \tnf{\mf x}$ and the Gaussian random variable $\tn{\gamma}$ have finite moments of any order. Then,
\begin{equation}
\begin{split}
    \pt \tpnf{x} &\Hquad=\Hquad \pt (A \tn{x} + S \tn{\xi})\\
    &\Hquad\underset{\incircle{1}}{=}\Hquad (\pt A) \tn{x} + A (\pt \tn{x}) + (\pt{S}) \tn{\xi}\\
    &\convincircle{2} (\pt A) \tn{\mf x} + A (\pt \tn{\mf x}) + (\pt S) \tn{\xi}\\
    &\Hquad\underset{\incircle{3}}{=}\Hquad \pt (A \tn{\mf x} + S \tn{\xi})\\
    &\Hquad=\Hquad \pt \tpnf{\mf x}.
\end{split}
\end{equation}
Here equalities \incircle{1} and \incircle{3} follow from chain rule. For \incircle{2} we have used the $L^p$ convergence of $\tn{x}$ to $\tn{\mf x}$ and $\pt \tn{x}$ to $\pt \tn{\mf x}$. Both derivatives $\pt \tpnf{x}$ and $\pt \tpnf{\mf x}$ exist since both $\pt \tn{x}$ and $\pt \tn{\mf x}$ exist. We also have the moment bound:
\begin{equation}
\begin{split}
    \| \pt \tpnf{\mf x} \|_p & \le |\pt A| \| \tn{\mf x} \|_p + |A| \| \pt \tn{\mf x} \|_p + |\pt S| \| \tn{\xi} \|_p \le c,
\end{split}
\end{equation}
since $\tn{\mf x}$, $\pt \tn{\mf x}$ and Gaussian random variable $\tn{\xi}$ have finite moments of any order.
Thus \cref{eq:grad_lemma_1} is proved for step $t+1$ and the induction step is finished, which concludes the proof of the lemma.
\end{proof}
\begin{proof}[Proof of {\hypersetup{hidelinks}\cref{prop:gradient}}]
Recall the definition of $h_t$ \cref{eq:ht_proof}. It suffices to show that, for each $t\ge 1$,
\begin{equation}
    \pt \Big(h_t(\tf{m}, \tf{C}) \Big) \conv \pt \Big(h_t (\tf{\mf m}, \tf{\mf C}) \Big).
\end{equation}
We first investigate the convergence of derivatives of $h_t$ w.r.t $\tf{m}$ and $\tf{C}$. The derivatives are computed in \cref{eq:grad-h}:
\begin{equation}\label{eq:partialm-conv}
\begin{split}
    \partial_m h_t(\tf{m}, \tf{C}) &\Hquad=\Hquad - H^{\top} (H \tf{C} H^{\top} + R)^{-1} (y_t - H \tf{m})\\
    &\conv - H^{\top} (H \tf{\mf C} H^{\top} + R)^{-1} (y_t - H \tf{\mf m})\\
    &\Hquad=\Hquad \partial_m h_t(\tf{\mf m}, \tf{\mf C}),
\end{split}
\end{equation}
and
\begin{equation}\label{eq:partialc-conv}
\begin{split}
    \partial_C h_t(\tf{m}, \tf{C}) &\Hquad=\Hquad -\frac{1}{2} H^{\top} (H\tf{C} H^{\top}+R)^{-1} H\\
    &\quad\quad\quad\quad + \frac{1}{2} H^{\top} (H\tf{C} H^{\top}+R)^{-1} (y_t-H \tf{m})(y_t-H\tf{m})^{\top} (H\tf{C} H^{\top}+R)^{-1} H\\
    &\conv -\frac{1}{2} H^{\top} (H\tf{\mf C} H^{\top}+R)^{-1} H \\
    &  \quad\quad\quad\quad + \frac{1}{2} H^{\top} (H\tf{\mf C} H^{\top}+R)^{-1} (y_t-H\tf{\mf m})(y_t-H \tf{\mf m})^{\top} (H\tf{\mf C} H^{\top}+R)^{-1} H,
\end{split}
\end{equation}
where we have used the $L^p$ convergence of $\tf{m}$ to $\tf{\mf m}$ and $(H\tf{C} H^\top+R)^{-1}$ to $(H \tf{\mf C} H^\top+R)^{-1}$ by \cref{eq:HCHT-proof}, followed by \cref{lemma:conv}. Then, by chain rule,
\begin{equation}
\begin{split}
    \pt \Big( h_t(\tf{m}, \tf{C}) \Big) &\Hquad=\Hquad \Big( \partial_m h_t(\tf{m}, \tf{C}) \Big)^\top \pt \tf{m} + \text{Tr}\bigg( \Big( \partial_C h_t(\tf{m}, \tf{C})\Big)^\top \pt \tf{C} \bigg)\\
   &\conv \Big( \partial_m h_t(\tf{\mf m}, \tf{\mf C}) \Big)^\top \pt \tf{\mf m} + \text{Tr}\bigg( \Big( \partial_C h_t(\tf{\mf m}, \tf{\mf C})\Big)^\top \pt \tf{\mf C} \bigg)\\
    &\Hquad=\Hquad \pt \Big( h_t(\tf{\mf m}, \tf{\mf C}) \Big).
\end{split}
\end{equation}
Both derivatives exist since $\pt \tf{m}$, $\pt \tf{\mf m}$, $\pt \tf{C}$ and $\pt \tf{\mf C}$ exist, by \cref{lemma:conv-grad}. We have used \cref{eq:partialm-conv} and \cref{eq:partialc-conv} above, the $L^p$ convergence of $\pt \tf{m}$ to $\pt \tf{\mf m}$ and $\pt \tf{C}$ to $\pt \tf{\mf C}$ with rate $N^{-1/2}$ by \cref{lemma:conv-grad}, followed by \cref{lemma:conv}.
\end{proof}
\begin{remark}
We again emphasize that all the derivatives and chain rule formulas do \emph{not} need to be computed by hand in applications, but rather through the modern autodiff libraries. We list them out only for the purpose of proving convergence results. 
\end{remark}

\section{Additional Figures}\label{sec:addfig-appen}
See \cref{fig:linear_converge_off,fig:linear_converge_loc_off} for additional figures to the linear Gaussian experiment \cref{sec:linear}:
\begin{figure}[htbp]
	\centering
	\includegraphics[width=\textwidth]{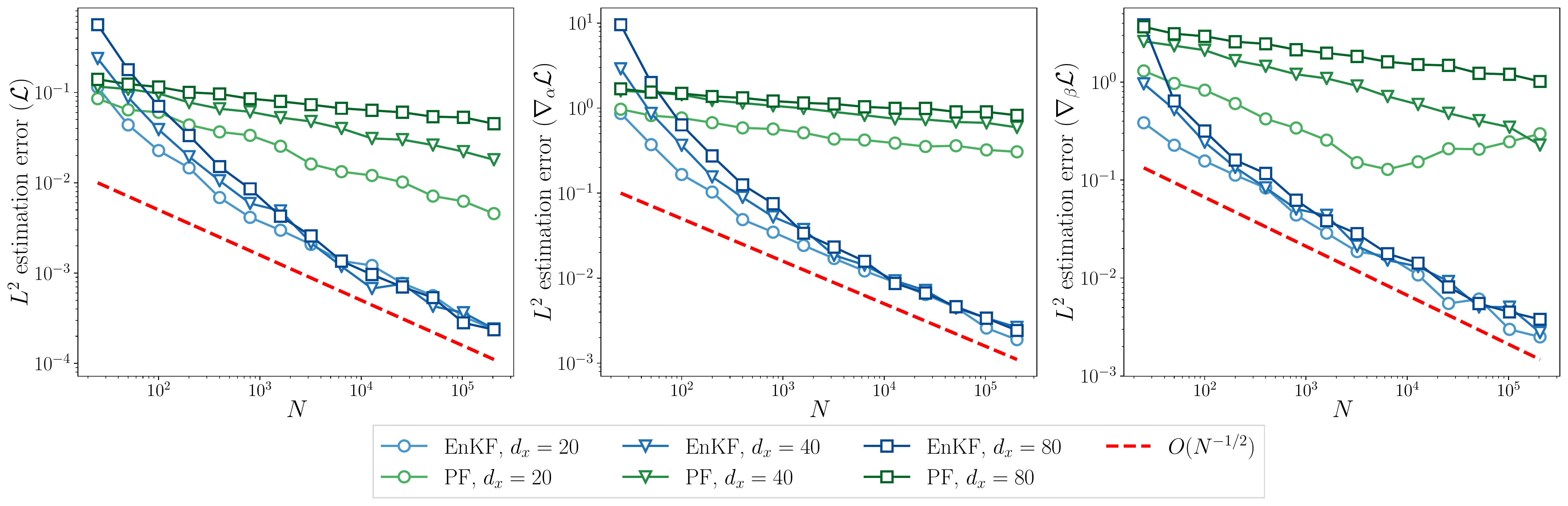}
	\vspace*{-7mm}
	\caption{Relative $L^2$ estimation errors of the log-likelihood (left) and its gradient w.r.t.\ $\alpha$ (middle) and $\beta$ (right), computed using EnKF and PF, as a function of $N$, for the linear-Gaussian model  \eqref{eq:linGauss}. State dimension $d_x\in\{20,40,80\}$. $\theta$ is evaluated at $\alpha=(0.5, 0.5, 0.5), \beta=(1,0.1)$. (\cref{sec:linear_converge})}
	 \label{fig:linear_converge_off}
\end{figure}

\begin{figure}[htbp]
	\centering
	\includegraphics[width=\textwidth]{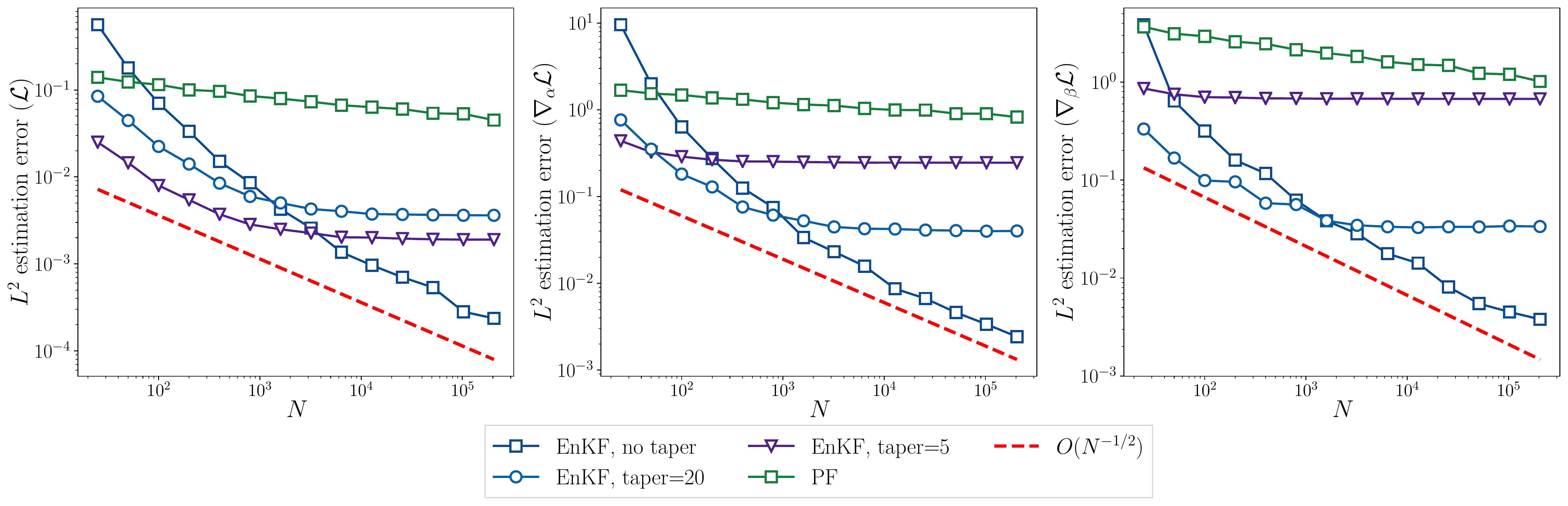}
	\vspace*{-7mm}
	\caption{Relative $L^2$ estimation errors of log-likelihood (left) and its gradient w.r.t.\ $\alpha$ (middle) and $\beta$ (right), computed using EnKF and PF, with different covariance tapering radius applied to EnKF. State dimension $d_x=80$. $\theta$ is evaluated at $\alpha=(0.5, 0.5, 0.5), \beta=(1,0.1)$.}
	 \label{fig:linear_converge_loc_off}
\end{figure}


\section{Auto-Differentiable Particle Filters}\label{sec:pf-appen}
Here we review the auto-differentiable particle filter methods introduced in, e.g.  \cite{naesseth2018variational,maddison2017filtering,le2017auto}. A particle filter (PF) propagates $N$ weighted particles $(\ton{w}, \ton{x})$ to approximate the filtering distribution $p_\theta(x_t|y_{1:t})$. It first specifies a proposal distribution $r(x_t|x_{t-1},y_t).$ Then it iteratively samples new particles from the proposal, followed by a reweighing procedure: 
\begin{equation}\label{eq:PF_update}
 \tnf{x} \sim r(\cdot|\tmn{ x},y_t), \quad\quad \tn{\widehat w}=\frac{p_\theta(\tnf{x}|\tmn{ x}) \Nc(y_t; H \tnf{x}, R) }{r(\tnf{x}|\tmn{ x}, y_t)} \tmn{w}.
\end{equation}
A resampling step is performed when necessary, e.g., if the effective sample size drops below a certain threshold:
\begin{equation}\label{eq:PF_resample}
(\tn{w}, \tn{x}) = \Bigl(\frac{1}{N}, \widehat x_t^{\tn{a}}\Bigr), \quad\quad \text{where } \tn{a} \sim \text{Categorical}\bigl(\cdot|\ton{\widehat w}\bigr).
\end{equation}
We refer the reader to \cite{doucet2009tutorial} for a more general introduction to PF. Here we consider PF with the optimal proposal $r(\tnf{x}|\tmn{x}, y_t) \triangleq p_\theta(\tnf{x}|\tmn{x}, y_t)$, as it is readily available for the family of SSMs that is considered in this paper \cite{doucet2000sequential,sanzstuarttaeb}. The unnormalized weights $\tn{\widehat w}$ also provides an approximation to the data log-likelihood. Define
\begin{equation}
    \LhPF(\theta) = \sum_{t=1}^T \log \Big( \sum_{n=1}^N \tn{\widehat w} \Big).
\end{equation}
Note that the weights $\tn{\widehat w}$ depends implicitly on $\theta$. It is a well-known result \cite{del2004feynman,andrieu2010particle} that the data likelihood $\exp \L(\theta)$ can be unbiasedly estimated, and a lower bound for $\L(\theta)$ can be derived:
\begin{equation}\label{eq:PFlike}
\Expect[\exp \LhPF (\theta)] = \exp \L(\theta), \quad\quad \Expect[\LhPF (\theta)] \le \L(\theta),
\end{equation}
where the last inequality follows from Jensen's inequality. We summarize the optimal PF algorithm and parameter learning procedure (AD-PF) in \cref{alg:PF} and \cref{alg:AD-PF} below. AD-PF with truncated backprop (AD-PF-T) can be built in a similar fashion as AD-EnKF-T (\cref{alg:AD-EnKF-T}), and is omitted. For a derivation of update rules of optimal PF, see e.g. \cite{sanzstuarttaeb}. For AD-PF where the proposal is learned from data, see e.g. \cite{naesseth2018variational,maddison2017filtering,le2017auto}. Following the same authors, the gradients from the resampling steps are discarded. Therefore, $\nt \LhPF$ is biased and inconsistent as $N\rightarrow\infty$ (see \cite{corenflos2021differentiable}).

\begin{algorithm}
\caption{Optimal Particle Filtering and Log-likelihood Estimation}
\label{alg:PF}
\begin{algorithmic}[1]
\Statex {\small {\bf Input}:  $\theta = \{\alpha,\beta\}, y_{1:T}, (w_0^{1:N}, x_0^{1:N}).$ (If $(w_0^{1:N}, x_0^{1:N})$ is not specified, $x_0^n \iidsim  p_0(x_0)$, $w_0^n=1/N$.)}
\State  {\bf Initialize} $\LhPF(\theta)=0.$ \label{eq:PF_init_alg}
\State Set $S=H Q_\beta H^\top + R$ and $K=Q_\beta H^\top S^{-1}.$
\For {$t= 1, \ldots, T$}
\State Set $\tnf{x} = (I-KH) F_\alpha(\tmn{x}) + K y_t + \tn{\xi}$, where $\tn{\xi}\sim \Nc(0, Q_\beta)$. \label{eq:PF_forecast_alg}
\State Set $\tnf{w} = \Nc\bigl(y_t; H F_\alpha(\tmn{x}), S\bigr)$
\State Set $(\tn{w}, \tn{x}) = \Bigl(\frac{1}{N}, \widehat x_t^{\tn{a}}\Bigr)$, where $\tn{a} \sim \text{Categorical}\bigl(\cdot|\ton{\widehat w}\bigr)$ \Comment{Resampling step}
\State    Set $\LhPF(\theta) \leftarrow \LhPF(\theta) + \log \Big( \sum_{n=1}^N \tnf{w} \Big).$ 
\EndFor
\Statex {\bf Output}: PF particles $(w_{0:T}^{1:N}, x_{0:T}^{1:N})$. Log-likelihood estimate $\LhPF(\theta)$.
\end{algorithmic}
\end{algorithm}

\begin{algorithm}
\caption{Auto-differentiable Particle Filtering (AD-PF)}
\label{alg:AD-PF}
\begin{algorithmic}[1]
\Statex {\bf Input}:  Observations $y_{1:T}$. Learning rate $\eta$.
\State {\bf Initialize} SSM parameter $\theta^0$ and set $k=0.$
\While {not converging}
\State $(w_{0:T}^{1:N}, x_{0:T}^{1:N}), \LhPF(\theta^k) = \textsc{ParticleFilter}(\theta^k, y_{1:T}).$ \label{eq:PF_main_1}
\Comment{\cref{alg:PF}}
\State Compute $\nt \LhPF(\theta^k)$ by auto-differentiating the map $\theta^k \mapsto \LPF(\theta^k).$
\State Set $\theta^{k+1} = \theta^{k} + \eta \nt \LhPF(\theta^k)$ and $k\leftarrow k+1.$
\EndWhile
\Statex {\bf Output}: Learned SSM parameter $\theta^k$  and PF particles $(w_{0:T}^{1:N},x_{0:T}^{1:N})$.
\end{algorithmic}
\end{algorithm}

\section{Expectation-Maximization with Ensemble Kalman Filters}\label{sec:em-appen}
Here we review the EM methods introduced in, e.g., \cite{pulido2018stochastic,bocquet2020bayesian}.
Consider the log-likelihood expression
\begin{align}
    \log p_\theta(y_{1:T}) &= \log \int p_\theta(y_{1:T}, x_{0:T}) \dd x_{0:T} \\
        &= \log \Expect_{q(x_{0:T}|y_{1:T})} \Big[ \frac{p_\theta(y_{1:T}, x_{0:T})}{q(x_{0:T}|y_{1:T})} \Big]\\
        &\ge \Expect_{q(x_{0:T}|y_{1:T})} \big[ \log p_\theta(y_{1:T}, x_{0:T}) - \log q(x_{0:T}|y_{1:T}) \big] := \L(q,\theta). \label{eq:EM-obj}
\end{align}
The algorithm maximizes $\L(q,\theta)$, which alternates between two steps:
\begin{enumerate}
    \item (E-step) Fix $\theta^k$. Find $q^k$ that maximizes $\L(q,\theta^k)$, which can be shown to be the exact posterior 
    \begin{equation*}
        q^k(x_{0:T}|y_{1:T}) = p_{\theta^k}(x_{0:T}|y_{1:T}).
    \end{equation*}
    \item (M-step) Fix $q^k$. Find $\theta^{k+1}$ that maximizes $\L(q^k,\theta)$:
    \begin{equation}\label{eq:EM-mstep}
        \theta^* = \argmax_\theta \Expect_{q^k(x_{0:T}|y_{1:T})} \big[ \log p_\theta(y_{1:T}, x_{0:T}) \big].
    \end{equation}
\end{enumerate}
The exact posterior in the E-step is not always available, so one can run an Ensemble Kalman Smoother (EnKS) to approximate the distribution $p_{\theta^k}(x_{0:T}|y_{1:T})$ by a group of equally-weighted particles $\oneton{x_{0:T}^{k,n}}$, and replace the expectation in the M-step by an Monte-Carlo average among these particles.

Consider again the nonlinear SSM defined by equations \eqref{eq:main_state}-\eqref{eq:main_obs}-\eqref{eq:main_init} with dynamics $F_\alpha$ and noise covariance $Q_\beta.$ The key to the methods proposed in, e.g., \cite{pulido2018stochastic,bocquet2020bayesian} is that, for the optimization problem \cref{eq:EM-mstep} in the M-steps, optimizing w.r.t.\ $\alpha$ is hard, but optimizing w.r.t.\ $\beta$ is easy. If we write $\oneton{x_{0:T}^{n}}$ as an approximation of $q(x_{0:T}|y_{1:T})$ (we drop $k$ here for notation convenience), the optimization problem \cref{eq:EM-mstep} can be rewritten as (using Monte-Carlo approximation)
\begin{align*}
    \argmax_\theta \frac{1}{N} \sum_{n=1}^N \log p_\theta(y_{1:T}, x_{0:T}^{n})
    &= \argmax_{\alpha, \beta} \frac{1}{N} \sum_{n=1}^N \sum_{t=1}^T \log \Nc \bigl(\tn{x}; F_\alpha (\tmn{ x}), Q_\beta\bigr).
\end{align*}
When $\alpha$ is given, optimizing w.r.t.\ $\beta$ becomes the usual MLE computation for covariance matrix of a multivariate Gaussian. We summarize the EM algorithm in Algorithm \ref{algorithm:EM}. Note that this special type of M-step update only applies to SSMs with transition kernel of the form \cref{eq:main_state}. For more general SSMs, updates for $\beta$ may not admit a closed form expression, and one may have to resort to a joint optimzation of $(\alpha,\beta)$ in the M-steps.

\begin{algorithm}
\setlength{\abovedisplayskip}{3pt}
\setlength{\belowdisplayskip}{3pt}

\caption{Expectation Maximization (EM) with Ensemble Kalman Filter}
\label{algorithm:EM}
\begin{algorithmic}[1]
\Statex {\bf Input}:  Observations $y_{1:T}$. Learning rate $\eta$. Smoothing parameter $S$. Inner-loop gradient ascent steps $J$.
\State {\bf Initialize} SSM parameter $\theta^0=(\alpha^0, \beta^0)$. Initialize $k=0.$
\While {not converging}
\State $x_{0:T}^{1:N} = \textsc{EnsembleKalmanFilter}(\theta^k, y_{1:T})$. \Comment{\cref{alg:EnKF}}
\State Perform smoothing for lag $S$ (see e.g. \cite{katzfuss2020ensemble}) and update $x_{0:T}^{1:N}$. 
\State Set $$Q_{\beta^{k+1}} = \frac{1}{NT} \sum_{n=1}^N \sum_{t=1}^T \Big( \tn{x} - F_{\alpha^{k}}(\tmn{ x}) \Big) \Big( \tn{x} - F_{\alpha^{k}}(\tmn{ x}) \Big)^{\top}.$$ 
\State Initialize $\alpha^{k,0}=\alpha^k.$
\For {$j=0,\dots,J-1$}
\State $\alpha^{k,j+1} = \alpha^{k,j} + \eta \na \LEMEnKF(\alpha^{k,j}, x_{0:T}^{1:N}), $ where $$\LEMEnKF(\alpha, x_{0:T}^{1:N}) \triangleq \frac{1}{N} \sum_{n=1}^N \sum_{t=1}^T \log \Nc(\tn{x}; F_\alpha (\tmn{ x}), Q_{\beta^{k+1}}).$$
\EndFor
\State Set $\alpha^{k+1} = \alpha^{k,J}$, $k\leftarrow k+1.$
\EndWhile
\Statex {\bf Output}: Learned SSM parameter $\theta^k=(\alpha^k, \beta^k)$  and particles $x_{0:T}^{1:N}.$
\end{algorithmic}
\end{algorithm}

\section{Implementation Details and Additional Performance Metrics}\label{sec:implement-appen}
\subsection{Linear-Gaussian Model}
The relative $L^2$ errors of the log-likelihood and gradient estimates in \cref{sec:linear_converge} are given by 
\begin{equation}
    \frac{\Expect[(\LhEnKF(\theta)-\L(\theta))^2]^{1/2}}{|\L(\theta)|},  
\end{equation}
and 
\begin{equation}
    \frac{\Expect[|\na \LhEnKF(\theta)-\na\L(\theta)|^2]^{1/2}}{|\na \L(\theta)|}, \quad\quad \frac{\Expect[|\nb \LhEnKF(\theta)-\nb\L(\theta)|^2]^{1/2}}{|\nb \L(\theta)|}.
\end{equation}

The tapering matrix $\rho$ in \cref{sec:linear_loc} is defined through
$[\rho]_{i,j} = \varphi \Big(\frac{|i-j|}{r}\Big),$ where
\begin{equation}\label{eq:tapering}
    \varphi(z)= \left. 
    \begin{cases}
    1-\frac{5}{3}z^2+\frac{5}{8}z^3+\frac{1}{2}z^4-\frac{1}{4}z^5, & \text{if } 0 \le z \le 1, \\
    4-5z+\frac{5}{3}z^2+\frac{5}{8}z^3-\frac{1}{2}z^4+\frac{1}{12}z^5-\frac{2}{3z}, & \text{if } 1 \le z \le 2,  \\
    0, & \text{if } z \ge 2.
    \end{cases}\right.
\end{equation}

We initialize the parameter learning at $\alpha^0=(0.5, 0.5, 0.5)$ and $\beta^0=(1,0.1)$.
For a fair comparison of convergence performance, we set the learning rate to be $\eta_\alpha=10^{-4}$ for $\alpha$ and $\eta_\beta=10^{-3}$ for $\beta$ for all methods, and perform gradient ascent for 1000 iterations.

\subsection{Lorenz-96}
\subsubsection{Additional Performance Metrics}
\paragraph{Averaged Diagnosed Error Level:} Measures the level of the learned model error:
\begin{equation}\label{eq:diagnosed-error}
    \sigma_\beta = \sqrt{\text{Tr}(Q_\beta)/{d_x}}.
\end{equation}

\paragraph{Forecast RMSE (RMSE-f):} Measures the forecast error of the learned vector field $f_\alpha$ compared to the reference vector field $f^*$ . We select $P=4000$ points $\{ x_p \}_{p=1}^P$ uniformly at random from the attractor of the reference system, by simulating a long independent run. The forecast RMSE is defined as
\begin{equation}\label{eq:RMSE-f}
\rmsef = \sqrt{ \frac{1}{d_x P} \sum_{p=1}^P \Big| F_{\alpha}(x_p) - F^*(x_p) \Big|^2} \,.
\end{equation}
Here we recall that $F^*$ and $F_\alpha$ are the $\Delta_s$-flow maps with fields $f^*$ and $f_\alpha,$ where $\Delta_s$ is the time between observations.
\paragraph{Analysis/Filter RMSE (RMSE-a):} Measures the state estimation error of the learned state transition kernel $p_\theta(x_t|x_{t-1})$. Let $(x_{0:T}^\train, y_{1:T}^\train)$ be one sequence of training data. Then a filtering run is performed over the data $y_{1:T}^\train$ with the learned state transition kernel $p_\theta(x_t|x_{t-1})$, using EnKF/PF, depending on which method is used for training. The (weighted) ensemble/particle mean $\bar x_{1:T}$ is recorded and the filter/analysis RMSE is defined as
\begin{equation}\label{eq:RMSE-a}
\rmsea = \sqrt{\frac{1}{d_x (T-T_b)}\sum_{t=T_b}^{T} \big| \bar x_t - x_t^\train \big|^2}.
\end{equation}
Here $T_b$ is the number of burn-in steps. For simplicity, we set $T_b = \lfloor T/5 \rfloor$. If multiple sequences of training data are used, then the quantity inside the square root is further averaged among all training sequences. 
\paragraph{Test Log-likelihood:} Measures the approximate data log-likelihood under the learned state transition kernel. Let $(x_{0:T}^\test,y_{1:T}^\test)$ be a sequence of test data drawn from the reference model independent of the training data used to estimate $\theta$.
The test log-likelihood is defined as $\LhEnKF(\theta)$ (or $\LhPF(\theta)$) computed over $y_{1:T}^\test$ using the corresponding EnKF/PF approximation, depending on which method is used for training.

\subsubsection{Implementation Details}
We use Adam \cite{kingma2014adam} throughout this experiment, with a learning rate that decays polynomially with the number of training iterations. Specifically, let $i$ be the index of current iteration. The learning rate $\eta_i$ is defined as:
\begin{equation}
\eta_i = \left.
\begin{cases}
\eta_0,  & i \le I_0,\\
\eta_0 (i-I_0)^{-\tau}, & i > I_0,\\
\end{cases}
\right.
\end{equation}
where $\eta_0$, $I_0$, and $\tau$ are hyperparaters to be specified. We set $I_0=10$ throughtout the experiment. $\beta$ is initialized at $\beta^0=2\cdot \bm{1}$ where $\bm{1}\in\R^{d_x}$ is the all-ones vector.

\paragraph{Parameterized Dynamics}
$\alpha$ is initialized at $\alpha^0=\bm{0}\in \R^{18}$. For AD-EnKF-T and AD-PF-T we set $\eta_0=10^{-1}$ and $\tau=0.5$. For EM we set $\eta_0=10^{-1}$, $\tau=1$, inner-loop gradient ascent steps $J=3$ and smoothing parameter $S=0$. The choice of $S$ is selected from $\{0,2,4,6,8\}$, following \cite{bocquet2020bayesian,brajard2020combining}. Although EM-based approaches in theory require to compute the smoothing rather than filtering distribution (see e.g. \cite{bishop2006pattern,bocquet2020bayesian}), in practice the ensemble-based smoothing algorithms may fail to produce an accurate approximation. All hyperparameters are tuned on validation sets. We also find that only the choice of $\eta_0$ is crucial to the convergence of AD-EnKF-T empirically, while other hyperparameters ($I_0$ and $\tau$) have small effects on the convergence speed.

\paragraph{Fully Unknown Dynamics}
NN weights $\alpha^0$ are initialized at random using PyTorch's default initialization. For AD-EnKF-T, AD-PF-T and EM we set $\eta_0=10^{-2}$ and $\tau=1$. For EM we set inner-loop gradient ascent steps $J=3$ and smoothing parameter $S=0$. The structure of $f_\alpha^{\NN}$ is detailed in \cref{fig:nn-structure}.

\paragraph{Model Correction}
NN weights $\alpha^0$ are initialized at random using PyTorch's default initialization. For AD-EnKF-T and AD-PF-T we set $\eta_0=10^{-3}$ and $\tau=0.75$. For EM we set $\eta_0=10^{-3}$ and $\tau=1$, $J=3$ and $S=0$. The structure of $g_\alpha^{\NN}$ is the same as $f_\alpha^{\NN}$ in the previous experiment, see \cref{fig:nn-structure}.

\begin{figure}[!htb]
\begin{subfigure}[c]{.48\linewidth}
\centering
\begin{tikzpicture}[scale = 0.8, every node/.style={scale=0.8}]
\tikzstyle{main}=[circle, minimum size = 8mm, thick, draw =black!80, node distance = 10mm]
\tikzstyle{connect}=[-latex, thick]
\tikzstyle{rconnect}=[-latex, thick, draw=blue]
\tikzstyle{box}=[rectangle, draw=black!100]
    \node[main] (in) {$x$};
    \node [draw,rectangle split, rectangle split horizontal, rectangle split parts=3,minimum height=7mm] (CNN1) [below=of in]
        {$\text{CNN}_1^{(1)}$
         \nodepart{two} $\text{CNN}_1^{(2)}$
         \nodepart{three} $\text{CNN}_1^{(3)}$};
    \node[main, below=0.4in of CNN1.two split,scale=0.8] (times) {\Large$\times$};
    \node [box,minimum height=7mm] (CNN2) [below=0.8in of CNN1]{$\text{CNN}_2$};
    \node [box,minimum height=7mm] (CNN3) [below=of CNN2]{$\text{CNN}_3$};
    \node[main, below=of CNN3,scale=0.8] (out) {$f_\alpha^{\NN}(x)$};
    \path (in) edge [connect] (CNN1);
  \path (CNN1.two south) edge [connect] (times);
  \path (CNN1.three south) edge [connect] (times);
  \path (CNN1.one south) edge [connect] (CNN2);
  \path (times) edge [connect] (CNN2);
  \path (CNN2) edge [connect] (CNN3);
  \path (CNN3) edge [connect] (out);
\end{tikzpicture}
\subcaption{Network structure}
\end{subfigure}
{\small
\begin{subtable}{0.48\textwidth}
  \centering
   \begin{tabular}[C]{|l|l|l|l|}
    \hline
         & \#in & \#out & kernel size \\ \hline
    $\text{CNN}_1$ & 1 & 72(24$\times$3) & 5, circular \\ \hline
    $\text{CNN}_2$ & 48(24$\times$2) & 37  & 5, circular \\ \hline
    $\text{CNN}_3$ & 37 & 1 & 1 \\ \hline
    \end{tabular}
  \subcaption{Network details}
  \end{subtable}
  }
\caption{Structure of $f_\alpha^{\NN}$. Output channels of $\text{CNN}_1$ is divided into three groups of equal length $\text{CNN}_1^{(1)}$, $\text{CNN}_1^{(2)}$ and $\text{CNN}_1^{(3)}$. Input channels to $\text{CNN}_2$ is a concatenation of $\text{CNN}_1^{(1)}$ and $(\text{CNN}_1^{(2)} \times \text{CNN}_1^{(3)})$, where the multiplication is point-wise.}
\label{fig:nn-structure}
\end{figure}
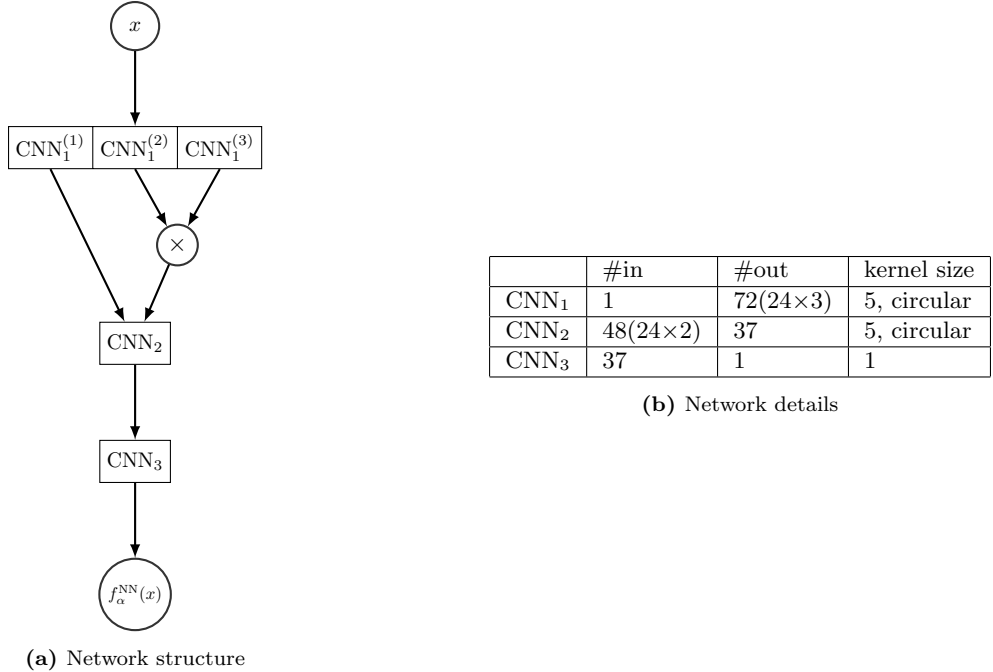

\end{document}